\documentclass{article}

\usepackage[preprint]{neurips_2026}

\usepackage[utf8]{inputenc} 
\usepackage[T1]{fontenc}    
\usepackage{hyperref}       
\usepackage{url}            
\usepackage{booktabs}       
\usepackage{amsfonts}       
\usepackage{nicefrac}       
\usepackage{microtype}      
\usepackage{xcolor}         

\title{Efficient Exploration Is Enough}

\usepackage{natbib}
\usepackage{amsmath}
\usepackage{amsthm}
\usepackage{mathrsfs}
\usepackage{graphicx}
\usepackage{wrapfig}
\usepackage{dsfont} 
\usepackage{datetime}
\usepackage{hyperref}
\usepackage{derivative}
\usepackage{caption}
\usepackage{subcaption}
\usepackage{algorithm}
\usepackage{algorithmic}
\usepackage{multirow}

\usepackage{amssymb}
\usepackage{mathtools}
\theoremstyle{plain}
\newtheorem{theorem}{Theorem}[section]
\newtheorem{proposition}[theorem]{Proposition}

\newtheorem{definition}[theorem]{Definition}
\newtheorem{assumption}[theorem]{Assumption}
\newtheorem{remark}[theorem]{Remark}

\author{%
  Mikel Malagón${^1}$\thanks{Correspondence to: Mikel Malagón \texttt{<mikel.malagon@ehu.eus>}.},
  Jon Vadillo$^1$,
  Josu Ceberio$^1$,
  Michael Bowling$^{2,3,4}$,
  Jose A. Lozano$^{5}$ \\
  $^1$University of the Basque Country (UPV/EHU) \\
  $^2$University of Alberta \\
  $^3$Alberta Machine Intelligence Institute \\
  $^4$Canada CIFAR AI Chair \\
  $^5$Basque Center for Applied Mathematics (BCAM) \\
}

\newcommand{\algocomment}[1]{\textcolor{gray}{\textit{\# #1}}}

\newcommand{\minibox}[1]{%
  \begingroup
  \setlength{\fboxsep}{0pt}
  \setlength{\fboxrule}{0.2pt}
  \fbox{\textcolor[HTML]{#1}{\rule{1.5ex}{1.5ex}}}%
  \endgroup
}

\newcommand{\actions}{\mathcal{A}}
\newcommand{\observations}{\mathcal{O}}
\newcommand{\states}{\mathcal{S}}
\newcommand{\env}{e}
\newcommand{\envset}{\mathcal{E}}
\newcommand{\policy}{\lambda}

\newcommand{\polunif}{\lambda_{\mathrm{unif}}}
\newcommand{\poldet}{\lambda_{\mathrm{det}}}
\newcommand{\policyset}{\Lambda}
\newcommand{\policysetall}{\Lambda_\text{all}}
\newcommand{\detpolicyset}{\Lambda_\text{det}}
\newcommand{\tranf}{{p_e}}

\newcommand{\hist}{\mathcal{H}}
\newcommand{\rvhist}{H} 
\newcommand{\hpre}[1]{h_{1:#1}}

\newcommand{\paramoptimiset}{\Omega}
\newcommand{\paramoptimi}{\omega}
\newcommand{\setwm}{\mathcal{F}}
\newcommand{\wmfunc}{f}

\newcommand{\ece}{\varepsilon}

\newcommand{\discrep}{d}
\newcommand{\loss}{\ell}
\newcommand{\lossg}{\loss^\text{global}_t}

\newcommand{\freq}{c}  
\newcommand{\binomial}{\text{Binomial}}

\newcommand{\variance}[2]{\text{Var}_{#1}\left[ #2 \right]}
\newcommand{\expect}[2]{\mathbb{E}_{#1}\left[ {#2} \right]}
\newcommand{\bias}[1]{\text{Bias}\left( {#1} \right)}

\newcommand{\envweightS}{{\nu^{t-1}_\tranf}}
\newcommand{\devi}{\phi}
\newcommand{\population}{\mathcal{P}}

\newcommand{\gini}{\mathscr{V}_\tranf}
\newcommand{\unifbias}{\mathscr{B}_\tranf}

\begin{document}

\maketitle


\begin{abstract}
 This work introduces an alternative view of efficient exploration and studies its theoretical and empirical implications in the absence of extrinsic rewards. Specifically, we define efficient explorers as agents that prioritize generating generalizable experience, i.e., data that supports learning models capable of predicting and adapting across the environment. This allows us to analyze efficient exploration through the lens of prediction and generalization. Theoretically, we demonstrate that optimally efficient explorers naturally schedule their trajectories to visit the most informative and learnable regions first. Empirically, we show that optimizing for these agents gives rise to an automatic curriculum of progressively more complex behaviors, even in relatively simple environments. These results indicate that pursuing this purely intrinsic objective alone is enough to drive the emergence of highly sophisticated behaviors. We believe that this new framework provides a principled mechanism by which agent-environment systems may sustain an open-ended process of increasingly complex behavior without external rewards, tasks, or objectives.



\end{abstract}



\section{Introduction} \label{sec:introduction}

Agents---understood as systems acting by themselves according to certain goals or norms in an environment \citep{barandiaran2009defining}---constitute the substrates of intelligent life as we know it.
The class of systems that satisfy this definition is broad, yet exploration is a ubiquitous behavior across all of them.
As such, exploration has been a topic of great focus in the field of Reinforcement Learning (RL) \citep{sutton2018rlbook}.
The field has produced a rich taxonomy of exploration strategies, ranging from classical methods such as $\epsilon$-greedy, Thompson sampling \citep{thompson1933likelihood}, and upper confidence bounds \citep{auer2002ucb}, to intrinsic motivation approaches widely used in deep RL, including count-based exploration \citep{bellemare2016unifying}, Intrinsic Curiosity Module (ICM) \citep{pathak2017curiosity}, and Random Network Distillation (RND) \citep{burda2018exploration}.

Most of these works are shaped by the idea that exploration implies maximizing environment coverage, i.e., visiting as many distinct regions of it as possible \citep{strehl2008analysis,henaff2022exploration}. Furthermore, methods that do not pursue full coverage typically restrict their focus to regions or skills relevant to a specific downstream task \citep{farebrother2026jumpy} or balance exploration with extrinsic reward. Thus, in the absence of extrinsic objectives, efficient exploration is broadly understood as efficient uniform environment coverage \citep{thrun1992efficient}. However, we believe this perspective overlooks the role of generalization. Experience gathered in one region might allow accurately predicting dynamics in other unseen scenarios. When models can generalize, or with proper priors, uniform environment coverage---even strategies that maximize the immediate prediction error of a model \citep{schmidhuber1991possibility,shyam2019model}---no longer represent the most efficient approach to exploration. Building on this idea, we propose a new definition of efficient exploration:
\begin{center}
    \textit{``An agent efficiently explores its environment if it \textbf{prioritizes} generating the most \\ \textbf{generalizable experience}.''}
  \end{center}

Specifically, by \textit{experience} we refer to the agent's history of interactions (i.e., observations and actions), while \textit{generalizable} implies a model and a broader target distribution to generalize toward. This model---commonly known as a \textit{world model} \citep{ha2018worldmodels}---is learned to estimate the environment's dynamics from the agent's ongoing stream of experience, and should ideally predict any scenario that might occur within the environment as well as possible. Thus, this purely intrinsic efficient exploration objective requires agents to generate histories that reduce the global prediction error of the world model over the long run.

To illustrate, consider an agent embodied as a humanoid robot on an island. Under our definition, an agent that learns to walk robustly across diverse terrains would be an efficient explorer, as effective locomotion is a prerequisite for generating diverse, generalizable experience. Conversely, an agent that engages in reckless behavior risks damaging itself, prematurely ending its ability to gather data. Thus, survival naturally emerges as a necessary condition for efficient exploration: it maximizes the time available to interact with the environment, ensuring the world model receives sufficient data to generalize as broadly as possible. Ultimately, progressively more complex behaviors---such as swimming to nearby islands, building shelters to ensure long-term survival, or learning to predict the weather---would naturally arise as mechanisms to further increase exploration efficiency.
This contrasts with the \textit{reward is enough} hypothesis by \citet{silver2021reward}, which argues that intelligent behaviors, including exploration, are ultimately driven by the maximization of extrinsic rewards. Here, we explore an alternative perspective, and provide theoretical and empirical evidence that efficient exploration---a purely intrinsic signal---is enough to drive the emergence of increasingly complex and sophisticated behaviors, including survival mechanisms traditionally thought to be explicitly motivated by extrinsic reward.

The remainder of this paper offers a particular formalization of efficient exploration and investigates its theoretical and empirical implications in the absence of extrinsic rewards, where an agent's optimality is solely determined by its exploration efficiency. Theoretically, this framework allows us to analyze efficient exploration through the lens of prediction and model generalization. First, we demonstrate that efficient exploration can  be achieved by deterministic, targeted behaviors. Second, under standard Markovian assumptions and a tabular world model, we show that exploration efficiency requires agents to schedule their visits such that the most informative and learnable regions of the environment are visited first, delaying visits to hard-to-model regions and naturally avoiding the noisy-TV problem or absorbing states. Furthermore, we establish connections between efficient exploration, survival, and prior work on intrinsic motivation.
Empirically, we investigate the behaviors that drive efficient exploration in partially observable settings where agents and world models are parametrized by Neural Networks (NNs) and formal theoretical analysis is intractable.
To this end, we employ a simple Monte Carlo (MC) approximation method to optimize agents towards efficient exploration.
When observing these efficient explorers, we find that complex behaviors automatically emerge, including exploiting environmental symmetries, solving simple mazes, and employing sophisticated navigation strategies. Furthermore, we note that this optimization process naturally induces sequences of progressively more complex behaviors.
Consequently, we believe this work represents a step toward the understanding and principled generation of complex behaviors in the absence of extrinsic goals for natural and artificial agents. Due to space constraints, a thorough review of the literature and detailed connections to prior work are deferred to Appendix~\ref{apx:relworks}.

\section{Preliminaries} \label{sec:prelims}

\paragraph{Notation} Uppercase Greek letters and calligraphic uppercase letters denote sets ($\Lambda$, $\mathcal{A}$), and lowercase Latin and Greek letters denote variables and functions ($\alpha$, $f$), except for the indicator function $\mathds{1}$. Uppercase Latin letters are reserved for random variables ($X$). Conditional probability functions are denoted as $p: \mathcal{X} \to \Delta(\mathcal{Y})$, where $\Delta(\mathcal{Y})$ is the probability simplex over $\mathcal{Y}$. Thus, $p(\cdot \mid x)$ is a conditional distribution over $\mathcal{Y}$ conditioned on $x\in\mathcal{X}$. We write $p(y \mid x)$ for the probability value assigned to $y$ given $x$.

%
%
We denote by $\actions$ and $\observations$ the sets of all possible actions and observations respectively, which we assume are finite.
%
%
Following the development of \citet{abel2023definition}, we refer to \textbf{histories} as sequences of action and observation pairs $(a, o)$ generated by the interactions between an agent and its environment, where the set of all possible histories of any length is $\hist = \bigcup_{t=0}^\infty (\actions \times \observations)^t$. We use $\hist_t = (\actions\times\observations)^t$ to refer to the set of all histories $\hpre{t}=(a_1 o_1,\ldots,a_t o_t)$ of $t$ interactions, where $\hpre{0}=\emptyset$ is the empty history. Given $\hpre{t}$ and $i \leq t$, the history $\hpre{i}$ denotes the prefix of length $i$ of the history $\hpre{t}$, thus $\hpre{i} = (a_1 o_1,\ldots,a_{i} o_i)$.
Following \citet{abel2025plasticity}, we employ the convention of \textbf{actions preceding observations}, that is, action $a_i$ causes observation $o_i$.
\begin{definition} \label{def:policy}
An \textbf{agent} is a function $\policy: \hist \rightarrow \Delta(\actions)$ that maps histories to probability distributions over actions.
\end{definition}
\begin{definition}\label{def:env}
An \textbf{environment} is a function $\env: \hist \times \actions \rightarrow \Delta(\observations)$ that maps tuples of histories and actions to probability distributions over observations.
\end{definition}

Let  $\policysetall$ denote the set of all possible agents,  $\detpolicyset \subseteq \policysetall$ be the subset of deterministic agents $\detpolicyset = \left\{ \policy \in \policysetall\, :\, \forall_{h\in\hist}\exists_{a\in\actions}\,  \policy(a\mid h) = 1 \right\}$, and $\envset$ be the space of all environments. We denote by $\policyset$ a non-empty subset $\policyset \subseteq \policysetall$ of agents of interest. Where specified, we assume contains $\policyset$ all possible agents for histories of length $t$.

\begin{assumption}\label{asm:pol-set-coverage}
  $\forall_{\hpre{t}\in\hist_t} \forall_{\policy\in\policysetall} \exists_{\policy'\in\policyset}$ such that $\forall_{i\in[1,t]}\; \policy(\hpre{i-1}) = \policy'(\hpre{i-1})$.
\end{assumption}

Furthermore, note that the definitions above are general and capture both the Markovian and non-Markovian cases, fully and partially observable settings, and episodic and non-episodic settings as discussed by \citet{abel2023definition}.
At this point, we can define the probability distribution over histories of length $t$ induced by following an agent $\policy\in\policysetall$ in an environment $\env\in\envset$ as,
\begin{equation}
    p(\hpre{t}\mid\policy, \env) = \prod_{i=1}^{t} \env(o_i\mid\hpre{i-1}, a_i)\, \policy(a_i\mid\hpre{i-1}),
\label{eq:prob-h-lambda-e}
\end{equation}
where $\hpre{i-1}$ is the prefix of length $i-1$ of $\hpre{t}$ and $a_i$ is the $i$-th action in $\hpre{t}$. We denote by $\rvhist_{1:t}$ the random variable corresponding to histories following this distribution.

\begin{figure}
    \centering
    \includegraphics[width=0.9\linewidth]{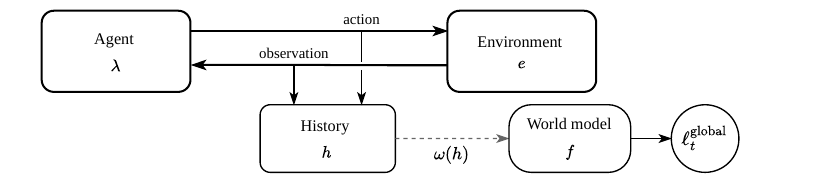}
    \caption{Diagram of the considered setting.}
    \label{fig:main}
    \vskip -1em
\end{figure}

As mentioned in Section~\ref{sec:introduction}, we evaluate an agent's exploration efficiency by the generalizability of the experience it generates. As the agent interacts with the environment, its ongoing stream of experience (i.e., its history) is used to dynamically estimate a world model at each interaction step. Importantly, we do not require that the agent explicitly possesses or accesses this world model. Rather, the model serves primarily as an external construct to quantify the agent's exploration efficiency based on the model's ability to generalize over time (see Figure~\ref{fig:main}). Nevertheless, our formalism is sufficiently general to accommodate agents that do act based on an internal world model learned from experience. To formalize these concepts, we define world models analogously to environments as follows.

\begin{definition}
A \textbf{world model} is a function $\wmfunc: \hist \times \actions \rightarrow \Delta(\observations)$ that maps a tuple of a history and an action to a probability distribution over observations.
\end{definition}

Let $\setwm$ be the set of all world models (hypothesis space). Note that for an environment $\env$, the optimal world model $f^*$ is given by $f^* = \env$.
Given a history generated by an agent, we formalize the estimation of these models through a generic learning rule.

\begin{definition}\label{def:lear-rule}
A \textbf{learning rule} is a function $\paramoptimi: \hist \rightarrow \setwm$ that maps histories to world models.
\end{definition}

We denote by $\paramoptimiset$ the set of all possible learning rules.
For instance, the learning rule $\tilde{\paramoptimi}\in\paramoptimiset$ that returns a world model that best fits the environment for a particular input history $\hpre{t}$ (not necessarily $f^*$) is given by,
\begin{equation}
    \tilde{\paramoptimi}(\hpre{t}) \in \underset{\wmfunc \in \setwm}{\arg \min} \sum_{i=1}^t \discrep\big(\wmfunc(\cdot \mid \hpre{i-1} , a_i), \ \ e(\cdot \mid \hpre{i-1}, a_i)\big),
\end{equation}
where the discrepancy function $\discrep$ is a divergence or metric that computes the statistical distance between two probability distributions (in this case, over observations). Namely, \mbox{$\discrep: \Delta(\observations) \times \Delta(\observations) \rightarrow \mathbb{R}^+$}, where $\discrep(\wmfunc(\cdot\mid h, a), \env(\cdot\mid h, a)) = 0$ iff $\wmfunc(\cdot\mid h, a) = \env(\cdot\mid h, a)$. For example, $\discrep$ can be instantiated as the total variation distance or the Kullback-Leibler divergence.

In this context, given an environment $\env$ and a world model $\wmfunc$, we define a global loss function that quantifies the overall discrepancy between the probability distributions generated by both functions across all actions and realizable histories up to the time horizon $t$.

\begin{definition}\label{def:gloss}
  Under an environment $e$ and a time horizon $t$, the \textbf{global loss} of a world model $\wmfunc$ is,
\begin{equation}
  \label{eq:global-loss}
  \lossg (\wmfunc) = \sum_{k=0}^{t-1} \expect{H_{1:k}\mid \polunif, \env}{\sum_{a \in \actions} \discrep\big(\wmfunc(\cdot \mid \rvhist_{1:k}, a), \, e(\cdot \mid \rvhist_{1:k}, a)\big)},
\end{equation}
where $\polunif$ is the uniform agent, i.e., $\forall_{h\in\hist,a\in\actions}\, \polunif(a \mid h) = 1 / |\actions|$.
\end{definition}

First, note that Equation~\eqref{eq:global-loss} computes the world model error (i.e., the discrepancy) exactly once for every possible  transition up to step $t$.
Second, for each history, it weights the error of this $(k+1)$-th transition by the probability of encountering the preceding history $\hpre{k}$ under the distribution induced by the uniform agent acting in the environment $\env$. Although evaluating the loss under the uniform policy $\polunif$ might initially seem arbitrary, this agent effectively serves as an unbiased history sampler for the environment $\env$.
\begin{proposition}\label{prop:rhounif}
  Let $\polunif\in\policysetall$ be the agent that always takes actions with uniform probability, $\forall_{h\in\hist,a\in\actions}\, \polunif(a\mid h) = 1 / |\actions| $. Then, $p(\hpre{k} \mid \polunif, \env) = |\actions|^{-k} \prod_{i=1}^{k} \env(o_i\mid\hpre{i-1}, a_i)\,$.
\end{proposition}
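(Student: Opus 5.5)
The plan is to substitute the definition of $\polunif$ directly into the history distribution of Equation~\eqref{eq:prob-h-lambda-e} and then factor out the constant action probabilities. No earlier result beyond that equation and the definition of $\polunif$ is needed.

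First, fix $k\ge 0$ and an arbitrary history $\hpre{k}=(a_1o_1,\ldots,a_ko_k)\in\hist_k$. Instantiating Equation~\eqref{eq:prob-h-lambda-e} with $\policy=\polunif$ and horizon $k$ gives
\begin{equation*}
p(\hpre{k}\mid\polunif,\env)=\prod_{i=1}^{k}\env(o_i\mid\hpre{i-1},a_i)\,\polunif(a_i\mid\hpre{i-1}).
\end{equation*}

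Second, for each $i\in[1,k]$, the prefix $\hpre{i-1}$ lies in $\hist$ and $a_i\in\actions$. The defining property of $\polunif$ therefore gives $\polunif(a_i\mid\hpre{i-1})=1/|\actions|$. This value does not depend on $i$, on the prefix, or on the chosen action.

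Third, since the product is finite and all factors are nonnegative reals, I can use commutativity to split it into $\prod_{i=1}^k |\actions|^{-1}$ times $\prod_{i=1}^k \env(o_i\mid\hpre{i-1},a_i)$. The first product equals $|\actions|^{-k}$, which is exactly the claimed expression. This requires $|\actions|\ge 1$, which holds because $\actions$ is a finite set on which $\polunif$ is a well-defined distribution, so it is nonempty.

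The only point needing care is the edge case $k=0$. There $\hpre{0}=\emptyset$, both products are empty, and both sides equal $1$, consistent with the convention that the empty history has probability one. Beyond this the argument is routine substitution, and I do not expect a substantive obstacle.
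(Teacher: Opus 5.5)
Your proposal is correct and matches the paper's proof: substitute $\polunif(a_i\mid\hpre{i-1}) = 1/|\actions|$ into Equation~\eqref{eq:prob-h-lambda-e} and pull the constant out of the product to obtain the factor $|\actions|^{-k}$. Your added remarks on the empty-history case $k=0$ and on $\actions$ being nonempty are harmless details that the paper leaves implicit.
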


The proof of Proposition~\ref{prop:rhounif} is deferred to Appendix~\ref{proof:rhounif}.
Intuitively, this proposition establishes that under $\polunif$, the probability of a history is strictly proportional to the environment's intrinsic transition dynamics. Because the action selection probabilities factor out as the constant $|\actions|^{-k}$, the resulting distribution over histories is governed solely by the environment's likelihood.
This weighting scheme ensures that discrepancies are not treated equally; errors occurring in highly improbable scenarios under $\env$ are naturally discounted, whereas errors in likely scenarios are prioritized.

Finally, note that at each timestep $t$, the world model is estimated exclusively using the interactions the agent has collected so far---that is, the history up to step t, which we refer to as its \textit{experience}. As noted by \citet{javed2024bigworld}, the agent will typically only ever observe a minuscule fraction of the full history space $\hist$ with which to minimize the global loss. Therefore, an efficient explorer must strategically gather its experience to yield the most generalizable data possible. We explore this dynamic in detail in the subsequent sections.



\section{Exploration efficiency} \label{sec:hypothesis}

We define the exploration efficiency of an agent in terms of the \emph{Expected Cumulative Error} (ECE) induced by its behavior, measured via the global predictive error of world models estimated from its generated histories.

\begin{definition} \label{def:ece} Under an environment $\env$, a learning rule $\paramoptimi$, and a time horizon $t\in\mathbb{N}$, the \textbf{expected cumulative error} of an agent $\policy\in\policysetall$ is defined as,
\begin{equation}
    \ece_t(\policy) = \mathbb{E}_{\rvhist_{1:t} \mid \policy, \env}\left[\sum_{i=1}^t \lossg(\paramoptimi(\rvhist_{1:i})) \right],
\label{eq:efficiency}
\end{equation}
where
$\rvhist_{1:i}$ refers to the prefix of $\rvhist_{1:t}$ truncated at timestep $i$.
\end{definition}

Informally, $\ece_t$ aggregates the expected global losses of the sequence of world models generated over $t$ interaction steps. Specifically, the model evaluated at each step $i\in[1, t]$ is estimated using the history collected by executing agent $\policy$ in environment $\env$ up to that step, i.e., $\rvhist_{1:i}$. To simplify the notation, we omit the environment $\env$ and the learning rule $\paramoptimi$ from $\ece_t$, and they are implicitly fixed by the context. Under our framework, agents that achieve a lower ECE correspond to more efficient explorers. Note that we employ $\lossg$ rather than $\ell_i^\text{global}$, meaning that each world model is evaluated across all input histories of lengths $[0, t-1]$.

\begin{definition}
  Let $\policy,\policy'\in\policysetall$, and a time horizon $t\in\mathbb{N}$. The agent $\policy$ is a more \textbf{efficient explorer} than $\policy'$ iff $\ece_t(\policy) < \ece_t(\policy')$.
\end{definition}

Consequently, exploration efficiency is an ordinal property, i.e., an agent is an efficient explorer only relative to others.
%
%
%
%
Thus, within $\policyset \subseteq \policysetall$ and a fixed time horizon $t$, there exists a subset $\policyset^*_t \subseteq \policyset$ of optimal explorers that minimize Equation~\eqref{eq:efficiency}.

\begin{definition} \label{def:optimal-explorers}
For a given time horizon $t$, a learning rule $\paramoptimi$, and an environment of interest $e$, we define the set of \textbf{optimal explorers} as those agents $\policyset^*_t \subseteq \policyset$ that achieve minimum ECE. Namely,
\begin{equation}
    \policyset^*_t = \underset{\policy\in\policyset}{\arg \min}\ \ece_t(\policy).
\label{eq:optimal-explorers}
\end{equation}
\end{definition}

Note that multiple agents may achieve the same minimum ECE value. We therefore define optimal explorers as a set rather than a single agent, as different behaviors can induce histories that are equally informative about the environment up to a time horizon $t$.
For example, starting from the same initial knowledge, two individuals who pursue different fields of study---such as biology and physics---may resolve similar uncertainty about their environment (the real world) over their lifetimes. Although their experiences differ substantially at the behavioral level, both trajectories can produce similarly generalizable models of the environment even under the same learning rule.
Furthermore, in a symmetric environment such as an empty grid world, two agents that explore the space using different patterns (e.g., clockwise versus counterclockwise sweeps) may generate experience with identical generalization properties and achieving the same ECE.
This intuition is captured by the next theoretical properties of the set of optimal explorers. 

\begin{theorem} \label{theo:opt-expl-props}
  Let $\policyset^*_t$  be the set of optimal agents for a time horizon $t$. Given Assumption~\ref{asm:pol-set-coverage}, and for all $t\in\mathbb{N}$ and $\paramoptimi\in\paramoptimiset$, the following hold:
  \begin{enumerate}
  \item There always exists at least one fully deterministic optimal explorer: $\forall_{\env\in\envset}\,, \policyset^*_t \cap \detpolicyset \neq \emptyset$.
  \item There exist environments with stochastic optimal explorers: $\exists_{\env\in\envset}$ $\policyset^*_t \setminus \detpolicyset \neq \emptyset$.
  \end{enumerate}
\end{theorem}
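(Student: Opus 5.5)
The plan is to treat $\ece_t$ as the value of a finite-horizon sequential decision problem on the tree of histories. Part~1 then follows from a backward-induction argument. Part~2 follows from a degenerate environment in which every world model is exact, so the global loss vanishes whatever the learning rule $\paramoptimi$ is.

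\textbf{Part 1.} Fix $\env$, $\paramoptimi$ and $t$. Define the per-step cost $c(\hpre{i}) = \lossg(\paramoptimi(\hpre{i}))$, which is a nonnegative number depending only on the history $\hpre{i}$. By Equation~\eqref{eq:prob-h-lambda-e} and linearity of expectation, $\ece_t(\policy)=\sum_{i=1}^t\sum_{\hpre{i}} p(\hpre{i}\mid\policy,\env)\,c(\hpre{i})$. This depends on $\policy$ only through the finitely many distributions $\policy(\cdot\mid\hpre{k})$ with $k<t$, since $\actions$ and $\observations$ are finite.

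Define values backward by $V_t(\hpre{t})=0$ and
\[
V_k(\hpre{k}) = \min_{a\in\actions}\sum_{o\in\observations}\env(o\mid\hpre{k},a)\big[c(\hpre{k}ao)+V_{k+1}(\hpre{k}ao)\big].
\]
An induction from $k=t-1$ down to $k=0$ gives two facts. First, for any $\policy\in\policysetall$, the expected cost-to-go from $\hpre{k}$ is at least $V_k(\hpre{k})$, because a mixture over actions is never below the minimum over actions. Second, the agent that plays an argmin action at each history attains $V_k(\hpre{k})$ with equality. Hence $\poldet$, defined by choosing an argmin at every history of length $<t$ and an arbitrary fixed action elsewhere, is deterministic and satisfies $\ece_t(\poldet)=V_0(\emptyset)\le\ece_t(\policy)$ for every $\policy\in\policysetall$, and in particular for every $\policy\in\policyset$.

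It remains to show that such a deterministic minimiser lies in $\policyset$. This is the step I expect to be the main obstacle. Assumption~\ref{asm:pol-set-coverage} gives, for each single history, a member of $\policyset$ that reproduces $\poldet$ along it. I would invoke it in the form the paper specifies, namely that $\policyset$ contains all behaviours on histories of length at most $t$. I would then pick $\policy'\in\policyset$ agreeing with $\poldet$ on every $\hpre{k}$ with $k<t$. Since $\ece_t$ only sees these histories, $\ece_t(\policy')=V_0(\emptyset)$, so $\policy'\in\policyset^*_t$. To ensure $\policy'\in\detpolicyset$, I would choose the representative whose continuation beyond step $t$ is also deterministic, which is possible since $\poldet$'s continuation is arbitrary. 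I would flag explicitly that the argument needs the assumption to supply such a representative.

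\textbf{Part 2.} Take any $\actions$ with $|\actions|\ge2$ and $|\observations|=1$, say $\observations=\{o\}$, with $\env(o\mid h,a)=1$. Every world model $\wmfunc\in\setwm$ must then satisfy $\wmfunc(\cdot\mid h,a)=\delta_o=\env(\cdot\mid h,a)$, because $\Delta(\observations)$ is a single point. By the defining property of $\discrep$, every term in Equation~\eqref{eq:global-loss} is zero, so $\lossg(\wmfunc)=0$ for all $\wmfunc$. Hence $\ece_t(\policy)=0$ for every agent, for any learning rule $\paramoptimi$ and any $t$, and therefore $\policyset^*_t=\policyset$.

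It then suffices that $\policyset$ contains a non-deterministic agent. Apply Assumption~\ref{asm:pol-set-coverage} with $\policy=\polunif$ and any history $\hpre{t}$ with $t\ge1$. This yields $\policy'\in\policyset$ with $\policy'(\cdot\mid\emptyset)=\polunif(\cdot\mid\emptyset)$, which is uniform on at least two actions, so $\policy'\notin\detpolicyset$. Thus $\policyset^*_t\setminus\detpolicyset\neq\emptyset$, and this holds for every $t$ and every $\paramoptimi$, as the statement requires.
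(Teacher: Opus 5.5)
Your Bellman recursion is a different route from the paper's for Part~1. The paper starts from an assumed optimal $\policy^*$, uses Proposition~\ref{prop:equiv-perf-hist} (every action in the support of $\policy^*$ has the same cost-to-go), and derandomises by picking one support action per history. You instead construct a deterministic minimiser over $\policysetall$ directly. This has the advantage of proving that the minimum $V_0(\emptyset)$ is attained, which the paper takes for granted when it asserts $\policyset^*_t\neq\emptyset$.

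The step you flag, however, is a real gap, and no form of Assumption~\ref{asm:pol-set-coverage} closes it. The assumption only asks $\policyset$ to reproduce behaviour on finitely many histories, while membership in $\detpolicyset$ demands determinism on all of $\hist$. Concretely, if $|\actions|\ge 2$, then $\policyset=\policysetall\setminus\detpolicyset$ satisfies the assumption in every reading: copy any $\policy$ on the histories that matter and randomise at one history that is never consulted. Yet $\policyset^*_t\cap\detpolicyset=\emptyset$ for this set. So Part~1 needs an explicit extra hypothesis such as $\detpolicyset\subseteq\policyset$. The paper's proof relies on the same hypothesis tacitly, when it places its $\poldet$ in $\policyset^*_t$ without checking $\poldet\in\policyset$.

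For Part~2, your degenerate environment also differs from the paper's argument. The paper mixes two deterministic optimal explorers that diverge on a reachable history, but never shows that such a pair exists for an arbitrary $\paramoptimi$. Your construction avoids that problem, but only because you choose $|\observations|=1$. In the paper, $\actions$ and $\observations$ are fixed at the outset, and $\paramoptimi$ (whose type depends on $\observations$) is quantified before $\env$. So for $|\observations|\ge 2$ your environment is not available. An environment that always emits the same observation no longer forces world models to be exact. An arbitrary $\paramoptimi$ may return an exact model after one first action and a wrong one after another, so $\ece_t$ is no longer constant over agents and your argument does not go through.

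The repair uses the same slack as above: $\ece_t(\policy)$ only reads $\policy(\cdot\mid\hpre{k})$ for $k<t$, while $\detpolicyset$ requires determinism everywhere. Adopt the reading you already use in Part~1, namely that all behaviours on histories of length at most $t$ are represented in $\policyset$. Then take $\policy'\in\policyset$ that agrees with your Bellman minimiser on histories of length $<t$ and is uniform on those of length $t$. It attains $V_0(\emptyset)$ and is not deterministic, in every environment and for every $\paramoptimi$, whatever $\observations$ is. This only needs $|\actions|\ge 2$, which Part~2 requires anyway. If only the single-path form of the assumption is available, do the same in a deterministic environment. There the minimiser realises a single history $h^*_{1:t}$ almost surely, so matching along one path of length $t+1$ through $h^*_{1:t}$ suffices.
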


The proof for Theorem~\ref{theo:opt-expl-props} is provided in Appendix~\ref{proof:opt-expl-props}. Intuitively, the first property establishes that efficient exploration does not inherently require randomness; unlike classical heuristics that inject stochastic noise to stumble upon novel states, optimally efficient exploration can be realized as a completely deliberate, systematic, and targeted process of data collection.
At the same time, the second property clarifies that determinism is not a strict requirement. By proving that there exist environments with non-deterministic optimal explorers, we show that when an environment presents multiple equally informative pathways, an agent can act stochastically across them without suffering any loss in its overall exploration efficiency.

\section{Theoretical analysis} \label{sec:theory}

In this section, we theoretically characterize the optimally efficient exploration behavior under standard Markovian dynamics, stationarity, a tabular world model, and the squared $L_2$ norm as discrepancy function.\footnote{In this case, $\discrep(\wmfunc_h(\cdot\mid s, a), \tranf(\cdot\mid s, a)) = \, \parallel \wmfunc_h(\cdot\mid s, a) - \tranf(\cdot\mid s, a) \parallel^2_2$.} Crucially, this tractable setting builds intuition that extends beyond these idealized constraints, as we demonstrate in Section~\ref{sec:experiments}.

We start by assuming a fully observable Markovian environment defined by a controlled Markov process \citep{puterman1994mdp}.

\begin{definition}
\label{def:cMP}
A \textbf{controlled Markov process} (cMP) is a Markov Decision Process (MDP) without the reward function and discount factor. It is defined by the triplet $(\states, \actions, \tranf)$, where $\states$ is the space of states, $\actions$ is the space of actions, and $\tranf$ is the transition probability function $\tranf: \states \times \actions \rightarrow \Delta(\states)$.
\end{definition}

As the cMP is defined over states, we restrict the history definition from Section~\ref{sec:prelims} to use states rather than observations, that is, $\observations = \states$, $\env(\hpre{t}, a) = \tranf(s_t, a)$, and $\hpre{0} = s_0 \in \states$.
%
We also make the following assumption about agents.

\begin{assumption}\label{asm:policy1}
Agents are stationary and Markovian. 
\end{assumption}

Under this assumption, the more general agent definition from Definition~\ref{def:policy} simplifies to  $\policy: \states \rightarrow \Delta(\actions)$. For the remainder of this section, we therefore treat $\policy\in\policysetall$ as a function over states rather than histories.


As introduced in Section~\ref{sec:prelims}, world models estimate the environment ($\tranf$ in this case) from histories generated by an agent. In this section, we analyze a specific instantiation of the learning rule $\paramoptimi$, defining it as a smoothed empirical estimator that derives $\tranf$ directly from the transition frequencies within the input history $h$. Under the Markovian assumption (Definition~\ref{def:cMP}), the resulting world models take the form $f: \states \times \actions \times \hist \rightarrow \Delta(\states)$. For convenience, we let $\wmfunc_h$ denote the model estimated from history $h$, meaning $\paramoptimi(h) = f_h$ throughout this section. Namely,
\begin{equation}\label{eq:wm-freqs}
    \wmfunc_h(s'|s, a) = \frac{\freq'(s', s, a; h) + \alpha}{\freq(s, a; h) + \alpha |\states|},
\end{equation}
where the function $\freq(s, a; h)$ counts the number of times the state-action pair $(s, a)$ occurs in $h$, $\freq'(s', s, a; h)$ denotes the number of times the transition $(s, a, s')$ appears in the history $h$, and $\alpha > 0$.
Specifically, $\freq(s, a; h_t) = \sum_{i=1}^t \mathds{1}(s_i=s \land a_i=a)$ and $\freq'(s', s, a; h_t) = \sum_{i=1}^{t-1} \mathds{1}(s_{i+1}=s' \land s_i=s \land a_i=a)$. Algebraically, Equation~\eqref{eq:wm-freqs} represents the posterior predictive mean of a Dirichlet-Multinomial model with a symmetric prior $\alpha$.
%
%
%
%
Additionally, the Markovian assumption allows us to simplify the global loss $\lossg$ in Equation~\eqref{eq:global-loss} as follows.

\begin{proposition} \label{prop:markov-gloss}
  Under a cMP environment (see Definition~\ref{def:cMP}) whose dynamics are defined by $\tranf$ and agents in $\policysetall$ follow Assumption~\ref{asm:policy1}, Equation~\eqref{eq:global-loss} can be simplified as,
\begin{equation} \label{eq:markov-gloss}
  \lossg (\wmfunc) = \sum_{a,s}  \discrep\big(\wmfunc(\cdot \mid s, a), \, \tranf(\cdot \mid s, a)\big)\,  \envweightS(s)
\end{equation}
where $\envweightS(s) = \sum_{k=0}^{t-1} \Pr(S_k = s \mid \polunif, \tranf)$ and is greater as the state $s$ has greater visiting probability according to the transition function $\tranf$ in the cMP (details in Remark~\ref{rem:envweight}).
\end{proposition}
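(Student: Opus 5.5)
The plan is to start from Definition~\ref{def:gloss} and push the Markovian structure through the expectation. In the cMP setting we have $\env(\cdot\mid \hpre{k}, a) = \tranf(\cdot \mid s_k, a)$, where $s_k$ is the last state of $\hpre{k}$ (and $s_0$ for $k=0$). The tabular model $\wmfunc$ from Equation~\eqref{eq:wm-freqs} also depends on its input history only through the current state. Here we mean the input history at which the model is queried, as opposed to the training history $h$ that fixes $\wmfunc = \wmfunc_h$. Hence, for each $k$, the summand $\sum_{a}\discrep(\wmfunc(\cdot\mid \rvhist_{1:k},a), e(\cdot\mid \rvhist_{1:k},a))$ is a deterministic function of the random state $S_k$, namely $g(S_k) := \sum_a \discrep(\wmfunc(\cdot\mid S_k,a),\tranf(\cdot\mid S_k,a))$.

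Next we rewrite each expectation over histories as an expectation over the final state. By the law of total expectation, $\mathbb{E}_{H_{1:k}\mid\polunif,\env}[g(S_k)] = \sum_{s\in\states} g(s)\Pr(S_k=s\mid \polunif,\tranf)$. This marginal is obtained by summing Equation~\eqref{eq:prob-h-lambda-e} over all histories of length $k$ ending in $s$; Proposition~\ref{prop:rhounif} gives the explicit form of those probabilities under $\polunif$. Since $\polunif$ is stationary and Markovian, it satisfies Assumption~\ref{asm:policy1}, so $S_k$ is a Markov chain with kernel $\frac{1}{|\actions|}\sum_a \tranf(\cdot\mid s,a)$ started at $s_0$. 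This makes $\Pr(S_k = s\mid\polunif,\tranf)$ well defined. For $k=0$ the probability is just the indicator $\mathds{1}(s=s_0)$.

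Finally, all sums are finite, so we exchange $\sum_{k=0}^{t-1}$ with $\sum_{s}\sum_a$. Collecting terms yields $\sum_{a,s}\discrep(\wmfunc(\cdot\mid s,a),\tranf(\cdot\mid s,a))\,\envweightS(s)$ with $\envweightS(s)=\sum_{k=0}^{t-1}\Pr(S_k=s\mid\polunif,\tranf)$. The qualitative claim that $\envweightS(s)$ is larger for states that are more probable under $\tranf$ is then immediate from this formula: $\envweightS(s)$ is the expected number of visits to $s$ in the first $t$ steps of the uniform-action chain, with details deferred to the remark.

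The main obstacle I expect is notational rather than mathematical. The step needing care is justifying that $\wmfunc(\cdot\mid \hpre{k},a)$ depends on $\hpre{k}$ only through $s_k$. In the general definition, a world model takes arbitrary histories, so this must be taken as part of the tabular/Markovian world-model class that the section adopts. I would state this explicitly before conditioning on $S_k$.
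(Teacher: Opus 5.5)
Your proposal is correct and follows essentially the same route as the paper, which also reduces the discrepancy term to a function of the last state $s_k$, groups length-$k$ histories by that final state (your law-of-total-expectation step), and exchanges the finite sums to get $\envweightS(s)=\sum_{k=0}^{t-1}\Pr(S_k=s\mid\polunif,\tranf)$. Your explicit point that $\wmfunc(\cdot\mid \hpre{k},a)=\wmfunc(\cdot\mid s_k,a)$ is a property of the tabular model class is a more careful justification of that step than the paper's, which attributes it to $\tranf$ being Markovian.
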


The proof of Proposition~\ref{prop:markov-gloss} is provided in Appendix~\ref{apx:markov-gloss}.
Finally, by instantiating the discrepancy function $d$ as the squared $L_2$ norm in the global loss (see Equation~\eqref{eq:markov-gloss}), we derive an alternative expression for optimal explorers under the setting introduced in this section.

\begin{theorem}\label{theo:main}
Let the triplet $(\states, \actions, \tranf)$ be a cMP following Definition~\ref{def:cMP} and $\policyset$ be a set of agents under Assumption~\ref{asm:policy1}. Assume further that the discrepancy function $\discrep$  is the squared $L_2$ norm.
Then, the problem of finding the optimal explorers $\policyset_t^*$  is equivalent to solving the following minimization problem,
\begin{align}\label{eq:main-theo-main}
   \underset{\policy\in\policyset}{\arg \min} \sum_{i=1}^t \sum_{a, s} \envweightS(s) & \left( \gini(s, a) \, \expect{N\mid \policy, \tranf}{\frac{N}{(N+\alpha |\states|)^2}} \right. \nonumber \\
   & + \left. \unifbias(s,a)\, \expect{N\mid \policy, \tranf}{\frac{\alpha^2}{(N+\alpha |\states|)^2}}\right),
\end{align}
where $N = \freq(s, a; H_{1:i})$, $\gini(s,a) = \sum_{s'}\tranf(s'\mid s, a) (1-\tranf(s'\mid s, a))$ is the variance of the environment's next state distribution at $(s, a)$, and $\unifbias(s,a) = \sum_{s'} (1-|\states| \tranf(s'\mid s, a))^2$ is the smoothing bias that measures how far the prior of $f$ (see Definition~\ref{eq:wm-freqs}) is from the true next state distribution.
\end{theorem}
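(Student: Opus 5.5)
The plan is to start from Definition~\ref{def:ece} and rewrite the global loss with Proposition~\ref{prop:markov-gloss}, taking $\discrep$ to be the squared $L_2$ norm. This gives
\[
\ece_t(\policy)=\sum_{i=1}^t\sum_{a,s}\envweightS(s)\,\mathbb{E}_{H_{1:i}\mid\policy,\tranf}\Big[\big\|\wmfunc_{H_{1:i}}(\cdot\mid s,a)-\tranf(\cdot\mid s,a)\big\|_2^2\Big],
\]
where linearity of expectation lets us move the expectation inside the sums. The weights $\envweightS(s)$ depend only on $\tranf$ and $\polunif$, not on $\policy$. So it suffices to show that, for each fixed $i$ and each pair $(s,a)$, the inner expectation equals the summand in \eqref{eq:main-theo-main}. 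The $\arg\min$ over $\policyset$ is then unchanged.

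Fix $(s,a)$ and $i$. Write $N=\freq(s,a;H_{1:i})$, $C_{s'}=\freq'(s',s,a;H_{1:i})$ and $p_{s'}=\tranf(s'\mid s,a)$. By \eqref{eq:wm-freqs} we get the algebraic decomposition
\[
\wmfunc_h(s'\mid s,a)-p_{s'}=\frac{(C_{s'}-Np_{s'})+\alpha(1-|\states|p_{s'})}{N+\alpha|\states|}.
\]
Square this, sum over $s'$, and condition on $N$. This gives three terms:
\begin{itemize}
\item a variance term $\sum_{s'}\mathbb{E}[(C_{s'}-Np_{s'})^2\mid N]/(N+\alpha|\states|)^2$;
\item a cross term $2\alpha\sum_{s'}(1-|\states|p_{s'})\,\mathbb{E}[C_{s'}-Np_{s'}\mid N]/(N+\alpha|\states|)^2$;
\item a deterministic bias term $\alpha^2\sum_{s'}(1-|\states|p_{s'})^2/(N+\alpha|\states|)^2=\alpha^2\unifbias(s,a)/(N+\alpha|\states|)^2$.
\end{itemize}
Suppose that, conditional on $N=n$, the vector $(C_{s'})_{s'}$ is multinomial with $n$ trials and probabilities $p$. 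Then the cross term vanishes, and the variance term becomes $n\sum_{s'}p_{s'}(1-p_{s'})=n\,\gini(s,a)$. Taking the outer expectation over $N$ then yields exactly $\gini(s,a)\,\mathbb{E}[N/(N+\alpha|\states|)^2]+\unifbias(s,a)\,\mathbb{E}[\alpha^2/(N+\alpha|\states|)^2]$, and summing over $i,s,a$ gives the claim.

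The main obstacle is justifying this conditional multinomial structure. To do so I would use the standard ``stack of outcomes'' construction. For each $(s,a)$, pre-sample an i.i.d.\ sequence $X^{s,a}_1,X^{s,a}_2,\dots\sim\tranf(\cdot\mid s,a)$, and let the $k$-th visit to $(s,a)$ produce next state $X^{s,a}_k$. Because $\policy$ is Markovian and stationary (Assumption~\ref{asm:policy1}), this construction reproduces the law \eqref{eq:prob-h-lambda-e}, and $C_{s'}$ is the count of $s'$ among the first $N$ draws of the $(s,a)$-stack.

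The delicate point is that $N$ is random and may depend on earlier draws from the same stack. The process can return to $s$ only through outcomes of previous transitions. So I would have to argue that conditioning on $N=n$ leaves the first $n$ draws distributed i.i.d.\ up to the required second-moment identities. At minimum this means $\mathbb{E}[C_{s'}-Np_{s'}\mid N]=0$ and $\mathbb{E}[(C_{s'}-Np_{s'})^2\mid N]=Np_{s'}(1-p_{s'})$.

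I would first try a Wald-type argument. The increments $\mathds{1}(X_k=s')-p_{s'}$ form a martingale difference sequence, and $N$ is a stopping time for the joint filtration. However, Wald's identities only control unconditional moments, not moments weighted by $1/(N+\alpha|\states|)^2$. If this step fails in general, I would make explicit that the intended argument treats the next-state counts given $N$ as binomial. This is exact when the count of the $(s,a)$ visits is not influenced by that pair's own outcomes.

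A further bookkeeping detail also has to be checked: $\freq$ counts visits up to step $i$, while $\freq'$ counts transitions only up to $i-1$. The last visit's outcome is therefore not yet recorded. This has to be absorbed, for instance by using matching index ranges.
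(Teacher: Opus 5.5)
\paragraph{Comparison with the paper.} Your reduction is the paper's own argument. The paper writes $\wmfunc_{H_{1:i}}(s'\mid s,a)=(X+\alpha)/(N+\alpha|\states|)$ and states, as a remark and without proof, that $X\mid N=n\sim\binomial(n,p)$. It then reads off the conditional variance $np(1-p)/(n+\alpha|\states|)^2$ and the bias $\alpha(1-|\states|p)/(n+\alpha|\states|)$, averages over $N$, and sums over $s'$. Your three-term expansion with a vanishing cross term is the multinomial form of the same computation. The step you could not close is exactly the one the paper simply asserts, and your suspicion is right: it is false in general, so no Wald-type argument (or any other) can supply it.

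\paragraph{Counterexample to the per-pair identity.} Take $\states=\{s,u\}$ with a single action $a$, $\tranf(s\mid s,a)=p$, $\tranf(u\mid s,a)=1-p$, $u$ absorbing, $s_0=s$, $\alpha=1$ and $i=2$. Read the $k$-th transition as $(s_{k-1},a_k,s_k)$; these are your matched ranges, and they are consistent with actions preceding observations. Then $\{N=1\}=\{s_1=u\}$, on which $X=\freq'(s,s,a;H_{1:2})=0$ surely, while $N=2$ forces $X\ge 1$. A direct computation shows that the true $\mathbb{E}\|\wmfunc_{H_{1:2}}(\cdot\mid s,a)-\tranf(\cdot\mid s,a)\|_2^2$ minus the displayed summand $\gini(s,a)\,\mathbb{E}[N/(N+2)^2]+\unifbias(s,a)\,\mathbb{E}[1/(N+2)^2]$ equals $\tfrac{7}{12}p(1-p)(p-\tfrac12)$. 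For example, at $p=9/10$ the two values are $4661/36000$ and $3905/36000$.

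\paragraph{The argmin also changes.} Add a second action $b$ at $s$ with $\tranf(s\mid s,b)=q$ and $\tranf(u\mid s,b)=1-q$. Take $t=2$, $\alpha=1$, $p=0.53$, $q=0.52$, and let $\policyset$ be the two constant policies. The displayed objective is about $1.0670$ for always-$a$ and about $1.0678$ for always-$b$. The true ECEs are about $1.0737$ and $1.0722$. So the two minimization problems select different agents. Read with consistent indexing, the statement is therefore false without an extra hypothesis, and the paper's proof has the same hole as yours.

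\paragraph{How to repair it.} Your fallback is the correct repair, but it must enter as an assumption rather than be derived. For every $(s,a)$, every $i\le t$ and every $\policy\in\policyset$, the count $N$ must be independent of the $(s,a)$ outcome stack. This holds, for instance, for deterministic transitions, or for pairs that can be visited at most once among the first $i$ transitions. Under it, $(C_{s'})_{s'}\mid N=n$ is multinomial and your argument is complete.

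Your indexing remark is also a real correction that the paper omits. With the literal definitions, $\sum_{s'}\freq'(s',s,a;h_i)=\freq(s,a;h_i)-\mathds{1}(s_i=s\wedge a_i=a)$. So even with independence, the conditional law of $X$ given $N=n$ would not be $\binomial(n,p)$.
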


We defer the proof of Theorem~\ref{theo:main} to Appendix~\ref{apx:proof-theo-lp}. Under the assumptions of this section, this theorem reformulates the efficient exploration problem (in Equation~\eqref{eq:optimal-explorers}) into an equivalent optimization problem.
Remarkably, this analytical reframing allows us to clearly expose the underlying mechanics of optimally efficient exploration as follows.

\paragraph{Interpretation}
Equation~\eqref{eq:main-theo-main} shows that the contribution of each state-action pair $(s,a)$ to the total ECE is governed by two terms, both weighted by $\envweightS(s)$, the probability of visiting $s$ via a random walk in $t-1$ steps.
The first term inside the brackets is proportional to the variance $\gini(s, a)$, which grows with the entropy of the transition dynamics $\tranf(\cdot \mid s, a)$, and  its expectation decays as $O(1/N)$, where $N$ is the visit count to $(s,a)$.
The second term depends on the bias $\unifbias(s,a)$, which captures the discrepancy between the world model's prior and $\tranf(\cdot \mid s,a)$.
This term behaves inversely: it decreases as transition entropy rises, and its expectation decays more rapidly at $O(N^{-2})$.
Figure~\ref{fig:ece-dif-p} illustrates this dynamic by plotting the ECE contribution of a given $(s, a)$ tuple across different entropy levels of $\tranf(\cdot \mid s, a)$.
\begin{wrapfigure}[14]{r}{0.35\textwidth}
    \centering
    \includegraphics[width=\linewidth]{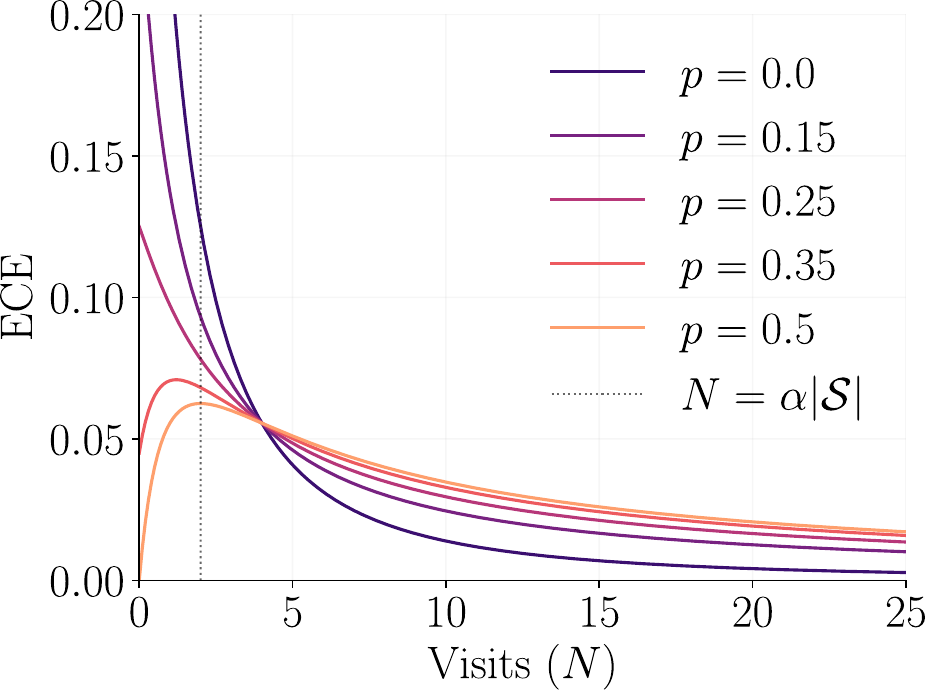}
    \vskip -0.2em
    \caption{ECE in a particular $(s,a)$ tuple, for $\alpha = 1$, $\states = \{s, s'\}$, $\envweightS(s) = \envweightS(s') = 1$, and different values of $p = \tranf(s'\mid s, a)$.}
    \vskip -0.2em
    \label{fig:ece-dif-p}
\end{wrapfigure}
When the entropy of $\tranf(\cdot \mid s, a)$ is very high, the model's prior aligns with the true distribution. As a result, the ECE starts at zero, implicitly discouraging the agent from allocating visits to these $(s,a)$ tuples.
For transitions with high (but non-maximal) entropy, the slower $O(N^{-1})$ variance term dominates, meaning a large number of visits is required to noticeably reduce the error. Conversely, for low-entropy transitions, the rapidly decaying $O(N^{-2})$ bias term governs the error.
Consequently, an efficient explorer seeking to minimize ECE most rapidly will naturally prioritize these easier-to-model, deterministic transitions, strategically allocating visits away from highly stochastic regions where loss reduction is prohibitively slow. Furthermore, in the extreme case of maximal entropy, the prior naturally aligns with reality, resulting in an initial ECE of zero that inherently discourages visitation.


\paragraph{Modeling accuracy vs. environment coverage} The expected loss reduction in Equation~\eqref{eq:main-theo-main} decays at rates of $O(N^{-1})$ and $O(N^{-2})$  with respect to the state-action visitation count $N$, sharing important similarities with count-based intrinsic motivation which often employ $1/\sqrt{N}$ bonuses \citep{sutton2018rlbook,bellemare2016unifying}: the marginal utility of revisiting transitions diminishes over time. Despite this similarity, Equation~\eqref{eq:main-theo-main} implies that an agent might not be able to uniformly distribute visits across the environment without sacrificing its overall ECE. Because inherent learning \emph{costs} vary dramatically across state-action pairs, the agent must strategically under-sample or entirely ignore difficult-to-learn transitions. Instead, it prioritizes its finite interaction budget on regions that lead to rapid and reliable improvements in global predictive accuracy, i.e., $\lossg$.


\paragraph{Survival} Additionally, Theorem~\ref{theo:main} establishes a formal link between efficient exploration and survival. Consider that the environment contains an absorbing state---such as one representing the \textit{death} of the agent. If an agent enters this state at timestep $i$ it is forced to allocate all of its remaining visitation budget $t-i$ to a single state-action pair. Because the expected error in Equation~\eqref{eq:main-theo-main} decays as $O(N^{-2})$ for deterministic transitions (as those of the absorbing state) the marginal reduction in ECE from these redundant visits to the same state quickly vanishes.
In this case, the agent wastes its budget on a learned transition while leaving other learnable regions of the environment unexplored. Therefore, to globally minimize ECE, an optimal explorer must inherently avoid absorbing states, giving rise to emergent survival behaviors as a requirement of efficient exploration.


\section{Empirical analysis} \label{sec:experiments}

Having analyzed the mechanics of efficient exploration under idealized conditions in Section~\ref{sec:theory}, we now investigate its implications within more complex, partially observable environments where agents and world models are parameterized by deep NNs.
Empirical evidence supports that efficient exploration serves as a natural catalyst for complex, purposeful behavior.
Specifically, we show that exploration efficiency---in the absence of extrinsic rewards---drives agents to
spontaneously generate progressively sophisticated behaviors.


\paragraph{Environments} We employ \texttt{SmallWorld}, a custom suite of grid-based environments illustrated in Figures~\ref{fig:results-empty-blocks} and~\ref{fig:results-maze-randcolors}. Rather than RGB pixels, \texttt{SmallWorld} grids are defined by scalar cell values in $[-1,1]$, where negative values denote solid obstacles, zero represents empty space, and positive values indicate entities such as agents. Agents execute discrete movement actions (up, down, left, right) and receive partial, agent-centered observations as square matrices ($n\times n$ local slices of the grid). We use four environments: \textit{Empty}, a simple bare room; \textit{Blocks}, an open area with three solid obstacles; \textit{Maze}, a map partitioned into four rooms; and \textit{RandColors}, which contains two rooms whose cell values change randomly at each timestep, serving as sources of stochastic noise. We use $n = 5$ for all environments except \textit{Empty}, where $n = 7$. See Appendix~\ref{apx:smallworld} for details.

\paragraph{Method} We parameterize the agents using Gated Recurrent Units (GRUs) \citep{cho2014gru} and the world models as feed-forward NNs that process sequences of the past eight interactions, optimized via AdamW \citep{loshchilov2018adamw} (i.e., the learning rule). We use basic MC estimation to approximate the ECE and employ the OpenES \citep{salimans2017evolution} Evolution Strategy (ES) to optimize a population of agents towards more efficient exploration by minimizing this objective. Importantly, this method serves primarily as a simple tool for empirical analysis rather than a primary contribution of this work. See Appendix~\ref{apx:exp-impl} for additional details and hyperparameters.

\subsection{Results} \label{sec:results1}

\begin{figure}
  \centering
  \includegraphics[width=0.90\linewidth]{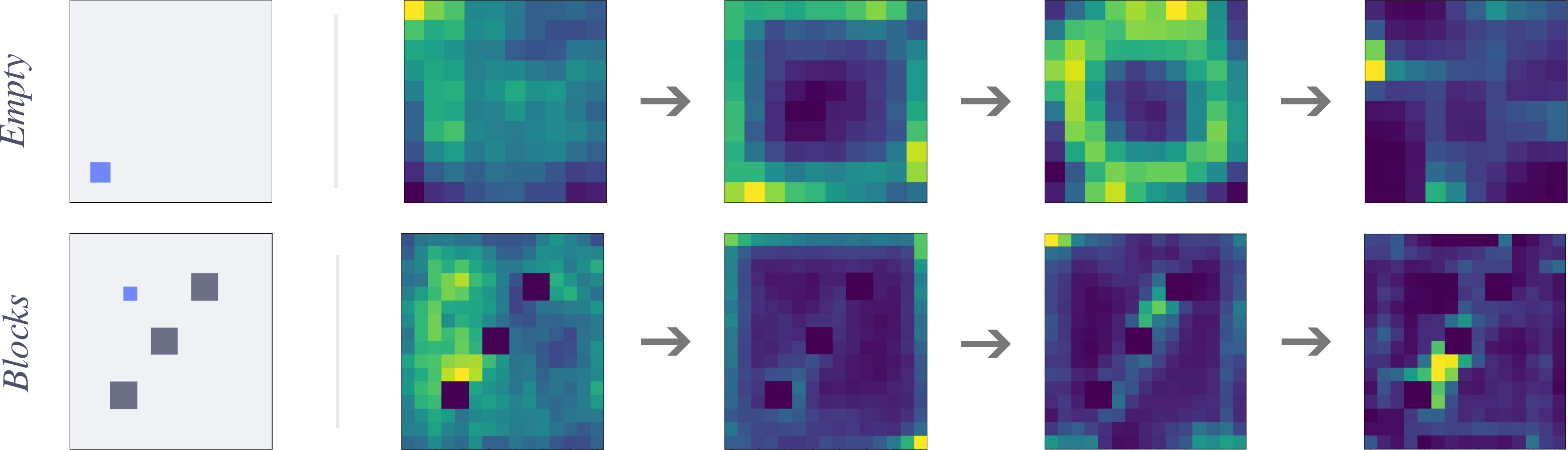}
  \caption{Environments are illustrated on the left, where the agent is denoted in blue \minibox{7287fd}, obstacles in dark gray \minibox{6c6f85}, and traversable space in light gray \minibox{eff1f5}. Corresponding results on the right display sequences of per-cell visitation heatmaps (lighter areas indicate more visits) progressing from early ES generations (left) to more efficiently exploring agents (right).}
  \label{fig:results-empty-blocks}
\vspace{-5mm}
\end{figure}

For the \textit{Empty} and \textit{Blocks} environments, Figure~\ref{fig:results-empty-blocks} illustrates the evolution of per-cell visitation heatmaps, progressing from early generations on the left to the best efficient explorers found by the ES on the right. Focusing on the \textit{Empty} environment, a clear behavioral shift is visible. Initially, agents focus on uniform coverage of the space (leftmost heatmap). However, as the ES finds agents that explore more efficiently, this unstructured exploration shifts into progressively specific trajectories. By the final generation, the agents converge on a helix-like navigation pattern (rightmost heatmap). We attribute this structure to the predictive challenges of partial observability (see Figure~\ref{fig:empty-compare} in Appendix~\ref{apx:more-results}). Specifically, straight traversals with homogeneous floor observations make predicting incoming walls more challenging for the world model. The emergent helix pattern addresses this by actively and periodically sampling these regions of the environment, providing the world model with highly informative and non-redundant data.

A similar shift toward structured behavior occurs in the \textit{Blocks} environment. Here, the most efficient agent found by the ES (rightmost heatmap) adopts a highly targeted strategy that exploits the inherent symmetries of the environment. Because the upper and lower blocks present identical transition dynamics, gathering data from both is redundant. By intensely sampling just one of these regions, the agent acquires the data necessary to accurately generalize to interactions across all structurally similar areas (see Figure~\ref{fig:errors-blocks} in Appendix~\ref{apx:more-results}). This illustrates how efficient exploration naturally penalizes redundancy while driving agents to exploit the regions of the environment that lead to broader generalization, such as structural symmetries.

In the \textit{Maze} environment, the lack of structural symmetries leads to full coverage (Figure~\ref{fig:res-maze} in Appendix~\ref{apx:more-results}). Yet, the most efficient explorers achieve this through a highly structured, cyclic trajectory rather than a random walk. As shown by the agent's initial 45 steps (see plot (b) in Figure~\ref{fig:results-maze-randcolors}), it prioritizes sampling the most informative regions (walls and corners) before transitioning to more predictable open areas. Furthermore, rather than moving to the adjacent bottom-right room, the agent goes to the upper region, avoiding redundant resampling of the starting area. This behavior aligns with our theoretical results: efficient exploration inherently schedules trajectories such that highly informative regions are visited first.

The \textit{RandColors} environment introduces regions of varying modeling difficulty, combining a deterministic corridor with two stochastic rooms, where the irreducible error in the right room is five times the left's. Tracking the best agent's trajectory across timestep intervals, in plot (d) of Figure~\ref{fig:results-maze-randcolors}, we observe that during the first 140 steps, the agent traverses the corridor in a zig-zag trajectory---learning the patterns that generalize across the corridor---and then transitions into the left (low entropy) room. Only later, as seen in plot (e) of Figure~\ref{fig:results-maze-randcolors} (showing steps 350 to 450), the agent shifts to the (higher entropy) right room exclusively. This progression shows that the agent actively constructs a curriculum \citep{bengio2009curriculum} for the world model: it prioritizes sampling the most informative and learnable regions before tackling harder, higher-entropy areas.
As shown in Figure~\ref{fig:res-randcolors} of Appendix~\ref{apx:more-results}, the agent ultimately dedicates more of its overall interaction time to the right room, as reducing the prediction error in this higher-entropy room demands more data (aligning with Figure~\ref{fig:ece-dif-p}).
Consistent with our theoretical results, this confirms that efficient explorers allocate interactions based on modeling difficulty, informativeness, and generalization capability.

\begin{figure}
  \centering
  \includegraphics[width=0.9\linewidth]{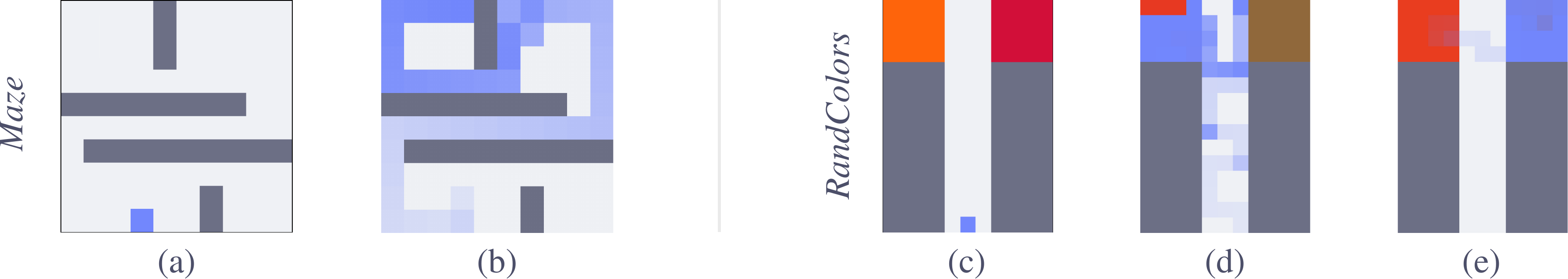}
  \caption{Plots (a) and (c) depict the \textit{Maze} and \textit{RandColors} environments following the same coloring scheme as in Figure~\ref{fig:results-empty-blocks} and including the colors \minibox{fe640b}, \minibox{d20f39}, and \minibox{90683b} as traversable space (i.e., floor).
The rest of the plots show the most efficiently exploring agents' trajectory across specific timesteps in blue (where earlier steps have higher transparency). Plot (b) shows the first 45 timesteps in \textit{Maze}, while (d) and (e) show the timestep intervals 1-140 and 350-450 in \textit{RandColors} respectively.
  }
  \label{fig:results-maze-randcolors}
\vspace{-5mm}
\end{figure}

\vspace{-3mm}
\section{Discussion} \label{sec:discussion}

From foundational RL algorithms \citep{strehl2008analysis,sutton2018rlbook} to current state-of-the-art methods \citep{schulman2017proximal,gallici2025simplifying}, exploration is primarily driven by random noise. In contrast, Theorem~\ref{theo:opt-expl-props} demonstrates that optimal efficient exploration is achieved through deliberate and targeted (even deterministic) data collection, motivating the development of strategies that rely on structure and generalization rather than stochasticity.

Furthermore, Theorem~\ref{theo:main} broadens the principled understanding of the behaviors that lead this optimal efficient exploration under common RL assumptions.
Intuitively, it establishes that an efficient explorer sequentially prioritizes the most learnable and  informative regions of the environment before addressing parts with higher inherent entropy or irreducible error.
Section~\ref{sec:theory} also provides a theoretical connection to survival: behaviors that optimize exploration efficiency, rather than immediate, localized surprise \citep{schmidhuber1991possibility,pathak2017curiosity,burda2018exploration}, naturally lead to survival; a vital property for non-episodic and non-ergodic environments, such as the real world.

Section~\ref{sec:experiments} goes beyond the assumptions of Section~\ref{sec:theory} to the more complex setting of partially observable environments, and NN-based (nonstationary) agents and world models capable of generalization.
Results align with our previous theoretical findings, illustrating how efficient exploration gives rise to sophisticated emergent behaviors that automatically create curricula for world models. For instance, agents actively exploit symmetries in the environment and systematically sequence interactions according to modeling complexity. Additional discussion on these emerging curricula is provided in Appendix~\ref{apx:order-matters}.

Furthermore, the evolutionary process employed to optimize towards efficient exploration in Section~\ref{sec:experiments}  reveals a meta-level curriculum. By tracking the best agents across generations (see Figure~\ref{fig:results-empty-blocks}) of the evolutionary process, we observe behaviors that grow progressively more sophisticated over time. The emergence of highly structured, non-trivial behaviors---such as the helix pattern in the \textit{Empty} environment---suggests that in richer, unconstrained domains, optimizing for efficient exploration could provide a principled mechanism to sustain the open-ended generation of increasingly complex behavior \citep{hughes2024position}. Importantly, unlike recent trends in this area \citep{bauer2023human,faldor2025omniepic}, we show that efficient exploration is enough to drive this open-ended process entirely in the absence of extrinsic objectives or tasks \citep{silver2021reward}. See Appendix~\ref{apx:open-endedness} for an extended discussion on the open-endedness definition by \citet{hughes2024position} and our efficient exploration framework.

Finally, extending our theoretical analysis to accommodate more expressive model classes presents an exciting path forward. In particular, analyzing efficient exploration under world models with generalization properties---such as Gaussian Processes or Neural Tangent Kernels \citep{jacot2018neural}---would offer deeper and rigorous insights into how optimal explorers exploit features of complex environments. Furthermore, while the basic MC estimation of the ECE employed in Section~\ref{sec:experiments} served to observe the behaviors of efficient explorers beyond the assumptions of Section~\ref{sec:theory}, developing scalable approximations for more complex settings remains a promising direction for future work.



\subsubsection*{Acknowledgments}

We are grateful to Xabier Barandiaran, Miguel Aguilera, Iñigo Urteaga, and Borja Calvo for the insightful conversations and feedback in the early stages of the project. We also thank Ainhize Barrainkua and Mahshid Rahmani Hanzaki for reading preliminary versions of this paper, and Jose A. Pascual for the technical support.

Mikel Malagón acknowledges a predoctoral grant from the Spanish MICIU/AEI with code PREP2022-000309, associated with the research project PID2022-137442NB-I00 funded by the Spanish MICIU/AEI/10.13039/501100011033 and FEDER, EU. 
Josu Ceberio has been partially supported by the Spanish MICIU/AEI/10.13039/PID2023-149195NB-I00.

This work is also funded through the BCAM Severo Ochoa accreditation
CEX2021-001142-S/MICIN/AEI/10.13039/501100011033; and the Research Groups 2026-2029 (IT2109-26), the BERC 2026-2029 program and Elkartek (KK-2026/00055 and KK-2026/00087) from the Basque Government. 
Finally, this research was partially funded by the Natural Sciences and Engineering Research Council of Canada (NSERC) and the Canada CIFAR AI Chairs program.

\bibliography{references}


\newpage

\appendix

\medskip

\renewcommand\qedsymbol{$\blacksquare$}

\section{Connections to previous work} \label{apx:relworks}

There exist many connections between the ideas presented in this paper and prior work across different fields. A comprehensive treatment of these connections or formal unification with existing theories is beyond the scope of this paper. Instead, we briefly discuss a few influential perspectives that are most closely related at a conceptual level, drawing informal connections and clarifying key conceptual differences.

\paragraph{Exploration and intrinsic motivation in RL}
Techniques designed to enhance exploration are nearly ubiquitous in reinforcement learning.
Common approaches include injecting noise into the agent’s decision-making process \citep{mnih2015human,fortunato2018noisy,hessel2018rainbow}, entropy regularization \citep{mnih2016asynchronous,schulman2017proximal}, and explicit reward bonuses for exploration \citep{chentanez2004intrinsically,bellemare2016unifying,henaff2022exploration}, commonly referred to as intrinsic motivation.
Methods in the latter category rely on auxiliary signals that quantify some notion of exploration and encourage agents to maximize it.

A prominent class of intrinsic motivation methods are those based on visitation counts, which try to maximize environment coverage.
Early examples include the work of \citet{strehl2008analysis}, which augments the reward with a term proportional to $N(s,a)^{-1/2}$, where $N(s,a)$ denotes the number of times action $a$ has been taken in state $s$. More recently, \citet{bellemare2016unifying} extended this idea to non-tabular settings by approximating state visitation counts using density models. While effective at encouraging coverage \citep{henaff2022exploration} as a form of exploration, these methods do not distinguish between states in terms of their informativeness---under efficient exploration, not all regions of the environment are equally valuable, as some interactions might lead to experience that generalizes more broadly to the underlying structure of the environment or is easier to learn. For instance, see results in Figures~\ref{fig:results-empty-blocks} and \ref{fig:results-maze-randcolors} from Section~\ref{sec:experiments}).

Another widely used exploration signal is model surprise. Early work by \citet{schmidhuber1991possibility}, followed by \citet{chentanez2004intrinsically} and \citet{oudeyer2007intrinsic}, proposed using the prediction error of a world model as an intrinsic reward. Subsequent approaches further developed this idea by measuring disagreement among ensembles of predictive models \citep{shyam2019model}, the error of an inverse dynamics model \citep{pathak2017curiosity}, or the error in predicting the features of a randomly initialized and fixed network \citep{burda2018exploration}. Although these methods employ predictive models to drive exploration, their objectives differ fundamentally from efficient exploration as defined in this work. In particular, maximizing immediate surprise favors local prediction errors, whereas our formulation prioritizes sampling trajectories that improve model generalization across the environment as a whole, even if this entails revisiting (or not) familiar states or reducing prioritizing long-term prediction error over short-term one (e.g., see results for the \textit{Maze} environment in Figure~\ref{fig:results-maze-randcolors}).

\paragraph{Continual reinforcement learning}
\citet{javed2024bigworld} argue that continual RL \citep{abel2023definition} is required for developing agents that are embodied in complex \textit{big} worlds as ours.
Indeed, continual RL has attracted increasing attention in recent years \citep{khetarpal2022towards}. However, many existing approaches \citep{rusu2016progressive, wolczyk2021continual, gaya2023building, malagon2024self} retain central assumptions inherited from the traditional RL paradigm that may be ill-suited to continual RL scenarios \citep{abel2024three,elelimy2025rethinking}.

In particular, \citet{elelimy2025rethinking} identify four foundational assumptions of conventional RL that conflict with the goals of continual RL \citep{abel2023definition}: the Markov decision process formalism, reliance on atemporal artifacts such as fixed training and deployment phases, evaluation via expected cumulative reward, and episodic benchmarks. In contrast, the framework we develop in Sections~\ref{sec:prelims} and~\ref{sec:hypothesis} deliberately avoids these assumptions by proposing formalisms that do not rely on episodic environments, explicit reward functions (of any type), Markov assumptions, or predefined task boundaries. Moreover, we argue that efficient exploration is a sufficient (intrinsic) objective for continual learning agents to develop increasingly complex behaviors autonomously with minimal restrictions.

\paragraph{Reward-free reinforcement learning}
Reward-free reinforcement learning commonly partitions the agent’s interaction with the environment into two distinct stages \citep{jin2020reward,zhang21near,miryoosefi2022simple}. In the first stage, the agent explores the environment to collect a dataset of interactions \citep{azar2017minimax}. In the second, this dataset is used---often via planning---to derive a policy with bounded regret for an arbitrary downstream reward function \citep{jin2020reward}. Although this line of work also studies exploration in the absence of explicit rewards, exploration efficiency is ultimately evaluated in terms of the usefulness of the collected experience for solving externally specified reward functions.

In contrast, our work does not treat exploration as a preparatory phase for later reward optimization. Instead, we avoid reliance on atemporal artifacts such as fixed training and deployment stages---a common practice in the RL literature that has been critically examined by \citet{abel2024three} and \citet{elelimy2025rethinking}. We define exploration efficiency as an intrinsic, ongoing (nonstationary) objective, independent of any explicit utility or reward function, and study its consequences in non-episodic and partially observable settings that do not necessarily fulfill the Markov property.

\paragraph{Open-endedness}
A growing body of work argues that explicit objective optimization can hinder long-term progress by prematurely converging to local optima and failing to discover necessary stepping stones \citep{stanley2015greatness, soros2017open}.
This perspective emphasizes open-ended exploration over direct optimization, suggesting that breakthroughs arise from serendipity and novelty search rather than predefined goals \citep{lehman2011abandoning,kumar2024asal}. The emergence of intelligent behavior via open-ended novelty search has inspired a growing number of works in recent years \citep{bauer2023human,bruce2024genie,matthews2025kinetix}, even characterizing it as essential for superhuman-level intelligence \citep{hughes2024position}.

Most of these approaches require on an automatic process to generate diverse tasks and environments, often framed as Unsupervised Environment Design (UED) \citep{parker2022evolving, bauer2023human, rigter2024rewardfree, beukman2024refining}. Thus, works based on UED and others such as OMNI-EPIC \citep{faldor2025omniepic} or Voyager \citep{wang2024voyager} rely on systems capable of producing large and evolving task distributions. While effective at expanding the space of behaviors an agent may encounter, these methods typically assume substantial control over the environment-generation process. Moreover, directly increasing task diversity also expands the task space, increasing the challenge of identifying tasks that are both informative and learnable.

In contrast, our work does not assume access to environment or task generation mechanisms. Instead, we focus on a fixed environment and study how efficient exploration can sustain the emergence of increasingly complex behavior through the agent's interaction dynamics alone, as shown in Section~\ref{sec:experiments} and discussed in Appendix~\ref{apx:open-endedness}.

\paragraph{Active inference} Since \citet{helmholtz1867handbuch}, a broad class of cognition theories agree that the brain maintains and updates an internal model of the environment \citep{gregory1980perceptions,doya2002bayesian,knill2004bayesian,friston2009brain}. The Free Energy Principle (FEP) \citep{friston2010free} has been proposed as a unifying account of this view. Under the FEP, living agents are characterized as systems that minimize the surprise of their sensory observations---equivalently, a variational free-energy bound---thereby maintaining homeostasis \citep{parr2022active}.
While active inference \citep{parr2022active}, the action-oriented formulation of the FEP, shares some similarities with the present work---notably, behavior that is driven by the reduction of model error---there are fundamental differences. Active inference requires agents to have prior preferences over the causes of sensory inputs, which in the case of living agents can encode preferences over survival-relevant states (e.g., a fish assigning high prior probability to being in water). In contrast, our formulation assumes no such priors. Instead, we argue that behaviors commonly associated with preferences, including those supporting survival, can emerge from efficient exploration alone. Furthermore, efficient exploration evaluates and minimizes model error globally, over the entire set of behaviors that may occur in a given environment (see Definition~\ref{def:gloss}), rather than minimizing expected surprise restricted to trajectories experienced under a particular agent.

\paragraph{Intelligence} There are \textit{many} definitions of intelligence \citep{legg2007collection,agueras2025intelligence}, and this paper does not propose a new one. Nevertheless,
several influential ideas are closely related to the ones presented in this paper and are therefore worth briefly discussing, mainly those by \citet{schmidhuber2008driven} and \citet{chollet2019measure}---note that \citet{silver2021reward} is already discussed in Section~\ref{sec:introduction}.

Intelligence as a process of compression has motivated multiple views on the topic \citet{schmidhuber2008driven,deletang2024language,huang2024compression}.
These are based on the idea that intelligent behavior emerges from an agent’s drive to discover compact representations of its experience.
In particular, \citet{schmidhuber2008driven} proposes compression progress---the intrinsic objective of acting to improve the compressibility of observed data---as a unifying explanation for phenomena such as curiosity, creativity, scientific discovery, and art.
While our framework can be interpreted through a compression lens, it differs fundamentally in what is being compressed. Under compression progress, agents seek actions that immediately improve the compressibility of their lived experience. In contrast, efficient exploration prioritizes experience that improves the compressibility of the environment itself, potentially at the expense of short-term compression gains. This distinction leads to qualitatively different behaviors and objectives, some have already been discussed in
the first paragraph of this section, as \citep{schmidhuber1991possibility} implements approximated compression progress.

Another influential perspective is offered by \citet{chollet2019measure}, who defines intelligence as \textit{“a measure of skill-acquisition efficiency over a scope of tasks.”} Although we do not establish a formal connection between this definition and our efficient exploration formalism, both share important conceptual characteristics. In sufficiently rich environments, efficient exploration requires rapid acquisition of reusable skills. For example, as illustrated by the robot scenario in the introduction, a robot that quickly learns to navigate the hostile island will explore more efficiently and at the same time---under Chollet's definition---be more intelligent than one that does not. At the same time, important differences remain: Chollet's definition is explicitly task-centric, whereas efficient exploration is agnostic to externally specified tasks, i.e., extrinsic objectives.

Finally, we do not claim that efficient exploration constitutes a definition or measure of intelligence. Rather, we view intelligent behavior as a prerequisite for achieving efficient exploration in sufficiently challenging environments, positioning efficient exploration as a driver and beneficiary of intelligence rather than its formal characterization.

\section{Proofs} \label{apx:proofs}

\subsection{Proof of Proposition~\ref{prop:rhounif}} \label{proof:rhounif}

Let $\polunif\in\policysetall$ be the agent that always takes actions uniformly at random. Then, according to Equation~\eqref{eq:prob-h-lambda-e}, the probability over histories of length $k$ that it induces in environment $e$ is given by,
\begin{equation}
  p(\hpre{k} \mid \polunif, \env) = \prod_{i=1}^k \env(o_i\mid \hpre{i-1}, a_i) \polunif(a_i\mid \hpre{i-1}).
\end{equation}
As, $\forall_{h\in\hist, a\in\actions}\, \polunif(a\mid h) = 1 / |\actions| = c$, then, we can write the equation above as,
\begin{equation}
  p(\hpre{k} \mid \polunif, \env) = \prod_{i=1}^k \env(o_i\mid \hpre{i-1}, a_i) \, c  = c^k\, \prod_{i=1}^k \env(o_i\mid \hpre{i-1}, a_i),
\end{equation}
and thus, proving Proposition~\ref{prop:rhounif}.
\subsection{Proposition \ref{prop:equiv-perf-hist}}

\begin{proposition} \label{prop:equiv-perf-hist}
  Let $\policyset^*_t$ be the set of optimally exploring agents for some time horizon $t$ as defined in Equation~\eqref{eq:optimal-explorers}. For all optimal explorers in $\lambda^* \in \policyset^*_t$ , for all history prefixes  $h_{1:i-1}\in\hist_{i-1}$ of length $1 \leq i \leq t$ and actions $a_i,a'_i\in\actions$ such that $\policy^*(a_i\mid h_{1:i-1}) > 0 \wedge \policy^*(a'_i\mid h_{1:i-1}) > 0$, then,
\begin{align}
  \label{eq:constcond}
  & \expect{H_{1:t}}{\sum_{k=i}^t \lossg(\paramoptimi(H_{1:k})) \mid H_{1:i-1} = h_{1:i-1}, A_i = a_i\,; \policy^*, \env} = \nonumber \\
  & \quad \quad \expect{H_{1:t}}{\sum_{k=i}^t \lossg(\paramoptimi(H_{1:k})) \mid H_{1:i-1} = h_{1:i-1}, A_i = a'_i\,; \policy^*, \env}.
\end{align}
\end{proposition}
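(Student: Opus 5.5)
The plan is a standard exchange (one-step improvement) argument by contradiction. Fix an optimal explorer $\policy^*\in\policyset^*_t$, a prefix $h_{1:i-1}$ and two actions $a_i,a'_i$ in the support of $\policy^*(\cdot\mid h_{1:i-1})$. Write $Q(h_{1:i-1},a)$ for the conditional continuation error $\expect{H_{1:t}}{\sum_{k=i}^t \lossg(\paramoptimi(H_{1:k}))\mid H_{1:i-1}=h_{1:i-1},A_i=a\,;\policy^*,\env}$. Suppose, for contradiction, that $Q(h_{1:i-1},a_i)<Q(h_{1:i-1},a'_i)$. The goal is to build an agent with strictly smaller ECE, contradicting Definition~\ref{def:optimal-explorers}.

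\textbf{Step 1: decompose the ECE.} Using Equation~\eqref{eq:prob-h-lambda-e} and the tower rule, split $\ece_t(\policy)$ into three pieces:
\begin{enumerate}
\item $\mathbb{E}\big[\sum_{k=1}^{i-1}\lossg(\paramoptimi(H_{1:k}))\big]$, which depends on $\policy$ only through its values on histories of length at most $i-2$;
\item $\sum_{k=i}^t$ terms on the event $\{H_{1:i-1}\neq h_{1:i-1}\}$;
\item $p(h_{1:i-1}\mid\policy,\env)\sum_{a}\policy(a\mid h_{1:i-1})\,Q_\policy(h_{1:i-1},a)$.
\end{enumerate}
$Q_\policy(h_{1:i-1},a)$ depends only on the values of $\policy$ on histories strictly extending $h_{1:i-1}$.

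\textbf{Step 2: define the modified agent.} Let $\policy'$ coincide with $\policy^*$ everywhere except at $h_{1:i-1}$. There it moves the mass $\policy^*(a'_i\mid h_{1:i-1})>0$ from $a'_i$ onto $a_i$. Then:
\begin{itemize}
\item the first two pieces, the prefix probability $p(h_{1:i-1}\mid\cdot,\env)$, and every $Q(h_{1:i-1},a)$ are unchanged;
\item the third piece decreases by $p(h_{1:i-1}\mid\policy^*,\env)\,\policy^*(a'_i\mid h_{1:i-1})\,\big(Q(h_{1:i-1},a'_i)-Q(h_{1:i-1},a_i)\big)$.
\end{itemize}
This decrease is strictly positive, so $\ece_t(\policy')<\ece_t(\policy^*)$.

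\textbf{Step 3: close the argument.} Two technical points remain.

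First, the conditional expectations in Equation~\eqref{eq:constcond} are well defined, and the decrease is strict, only when $p(h_{1:i-1}\mid\policy^*,\env)>0$. I will therefore state and prove the claim for prefixes reachable under $\policy^*$. For unreachable prefixes, the conditional expectation can be defined through the continuation process started at $h_{1:i-1}$, but optimality gives no constraint there, so the statement must be read in the reachable sense.

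Second, I must check that $\policy'\in\policyset$, and this is the main obstacle. Assumption~\ref{asm:pol-set-coverage} only guarantees agents in $\policyset$ that agree with a given agent along the prefixes of a single history, which is weaker than closure under a local modification. I would first try to invoke the stronger reading stated before the assumption, namely that $\policyset$ contains all agents on histories of length up to $t$; in that case $\policy'\in\policyset$ immediately.

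Failing that, I would try to patch the argument using the coverage property. For each history through $h_{1:i-1}a_i$ there is a member of $\policyset$ matching $\policy'$ along that history, and one would need to assemble these into a single agent. This may require the extra hypothesis that $\policyset$ is closed under such per-history modifications, which I would state explicitly if needed.
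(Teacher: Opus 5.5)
Your proposal is the same exchange-by-contradiction argument the paper gives: if two supported actions at $h_{1:i-1}$ had different continuation errors, moving the mass of the worse one onto the better one would give an agent with strictly smaller $\ece_t$, contradicting optimality. Both of your caveats are genuine and are glossed over in the paper, which cites Assumption~\ref{asm:pol-set-coverage} for a modified agent in $\policyset$ agreeing with $\policy^*$ ``in almost all histories'' and never requires $p(h_{1:i-1}\mid\policy^*,\env)>0$; that assumption controls only one history at a time, and the proposition can actually fail under it alone, so drop the per-history assembly fallback, commit to the stronger reading that $\policyset$ contains every agent on histories of length below $t$, and keep the claim restricted to reachable prefixes as you propose.
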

\begin{proof}
  We prove this by contradiction.
  Suppose that there exists actions $a_i$ and $a'_i$ in the support of $\policy^*(h_{1:i-1})$ such that taking the next action $A_i = a_i$ leads to a higher (less efficient) ECE compared to $A_i = a'_i$, i.e., replace the equality in Equation~\eqref{eq:constcond} with a $>$ inequality. We can construct a new agent $\policy \in \policysetall$ that is identical to $\policy^*$ except that it redistributes the probability mass from $a_i$ to $a'_i$, i.e., $\policy(a'_i \mid h_{1:i-1}) = \policy^*(a_i \mid h_{1:i-1}) + \policy^*(a'_i \mid h_{1:i-1})$ and $\policy(a_i \mid h_{1:i-1}) = 0$.

  By Assumption~\ref{asm:pol-set-coverage}, $\exists \policy \in \policyset$ that behaves identically to $\policy^*$ (in almost all histories up to length $t$) but completely avoids the suboptimal choice $a_i$ in favor of $a'_i$ given $\hpre{i-1}$, strictly improving the ECE, $\ece_t(\policy) < \ece_t(\policy^*)$.  This directly contradicts the premise that $\policy^*\in\policyset^*_t$. Therefore, all actions with positive probability under an optimal explorer yield the exact same expected future cumulative global loss.
\end{proof}


\subsection{Proof of Theorem~\ref{theo:opt-expl-props}} \label{proof:opt-expl-props}

Based on Proposition~\ref{prop:equiv-perf-hist} we prove the properties in Theorem~\ref{theo:opt-expl-props} independently.

\vspace{1mm}
\textit{1)  There always exists at least one fully deterministic optimal explorer: $\forall_{\env\in\envset}\,, \policyset^*_t \cap \detpolicyset \neq \emptyset$.}

As $\policyset$ is a non-empty subset of $\policysetall$ (see Definition~\ref{def:policy}), then, $\policyset^*_t$ is a non-empty subset of $\policyset$. Let $\policy^*\in\policyset^*_t$ be an optimal explorer. If $\policy^*\in\detpolicyset$, the proof trivially concludes.

Let $\policy^*\notin \detpolicyset$. We can always create a deterministic agent $\poldet\in\detpolicyset$ by deterministically selecting an action in the support of $\policy^*(h)$. Namely,
\begin{equation}
  \label{eq:detoptpol}
  \poldet(a\mid h) =
  \begin{cases}
    1 & \text{if $a = \hat{a}$ for some selected $ \hat{a} \in \{a'\in\actions : \policy^*(a'\mid h) > 0\}$}, \\
    0 & \text{otherwise}.
  \end{cases}
\end{equation}

By Proposition~\ref{prop:equiv-perf-hist}, for any history prefix $\hpre{i-1}$, all actions in the support of $\policy^*(\hpre{i-1})$ lead to the exact same future ECE. By backward induction from timestep $t$ to 1, replacing the stochastic action selection over $\policy^*(\hpre{i-1})$ with a deterministic choice of $\hat{a}$ (see Equation~\eqref{eq:detoptpol} above) at each step does not alter the future ECE. Consequently, $\ece_t(\poldet) = \ece_t(\policy^*)$, and thus, $\poldet \in \policyset^*_t \cap \detpolicyset$. This proves the first property in Theorem~\ref{theo:opt-expl-props}.

\vspace{1mm}
\textit{2)  There exist environments with stochastic optimal explorers: $\exists_{\env\in\envset}$ $\policyset^*_t \setminus \detpolicyset \neq \emptyset$.}

We prove this by constructing an environment $\env\in\envset$ where a non-deterministic optimal explorer exists.

Let $\env \in \envset$ where $\poldet^1, \poldet^2 \in \policyset^*_t \cap \detpolicyset$ and $\exists_{\hpre{t}\in\hist_t}$ such that $p(\hpre{t}\mid \polunif, \env) > 0$, and for a prefix $\hpre{i}$ of length $i$ of the history $\hpre{t}$ both agents take the same sequence of actions and then differ,
i.e., $\exists_{i\in [0, t-1]}\, p(\hpre{i}\mid \poldet^1, \env) = p(\hpre{i} \mid \poldet^2, \env) \, \wedge \, p(\hpre{i+1}\mid \poldet^1, \env) \neq p(\hpre{i+1} \mid \poldet^2, \env) $.\footnote{Note that for deterministic agents $p(h\mid \poldet, \env) > 0$ iff $\poldet$ always takes the actions in $h$ given observations in $h$.} Note that at this timestep $i+1$, we can stochastically choose between the action $a^1_{i+1} = \arg \max_{a\in\actions} \poldet^1(a\mid \hpre{i})$ and $a^2_{i+1} = \arg \max_{a\in\actions} \poldet^2(a\mid \hpre{i})$ and still obtain the optimal ECE as long as the rest of the choices are coherent with the chosen agent $\poldet^1$ or $\poldet^2$. In other words, we can construct an agent $\policy \in \policyset^*_t \setminus \detpolicyset$ that given the $k$-th prefix of $\hpre{t}$ act according the following rule,
\begin{equation}\label{eq:branch-det}
  \policy(a\mid \hpre{k}) =
  \begin{cases}
    \poldet^1(a\mid \hpre{k}) & \text{if $k < i$} \\
    p \poldet^1(a\mid \hpre{k}) + (1-p) \poldet^2(a\mid \hpre{k}) & \text{if $k = i$} \\
    \poldet^1(a\mid \hpre{k}) & \text{if $k > i$ and $a_{i+1} = a^1_{i+1}$}  \\
    \poldet^2(a\mid \hpre{k}) & \text{if $k > i$ and $a_{i+1} = a^2_{i+1}$}  \\
  \end{cases}
\end{equation}
where $p \in [0, 1]$ and for any other history that is not $\hpre{t}$ the agent follows either $\poldet^1$ or $\poldet^2$.\footnote{In Equation~\eqref{eq:branch-det}, using $\poldet^1(a\mid \hpre{k})$ or $\poldet^2(a\mid \hpre{k})$ for the first branch leads to the same action choice.} Note that $\policy$ must achieve the same ECE as $\poldet^1$ or $\poldet^2$, as otherwise we reach a direct contradiction with Proposition~\ref{prop:equiv-perf-hist}. Therefore, $\policy\in\policyset^*_t$, completing the proof for the second property in Theorem~\ref{theo:opt-expl-props}.

\paragraph{Examples} Examples of these environments include some symmetric environments, in which the visitation order of some regions of the environment results in equally informative experiences. For instance, visiting an empty room in a clockwise pattern or anti-clockwise pattern can lead to equally informative experiences. Other examples include selecting from multiple \textit{portals} that lead to the same region of the environment, or the trivial case in which $\paramoptimi$ always returns the same function $\wmfunc\in\setwm$, $\forall_{h\in\hist}\, \paramoptimi(h) = f$.



\subsection{Proof of Proposition~\ref{prop:markov-gloss}} \label{apx:markov-gloss}

Given Assumption~\ref{asm:policy1} in Section~\ref{sec:theory}, we have that $\policy(\hpre{t}) = \policy(\hpre{t-1} a_t s_t) = \policy(s_t)$, where we use the $\hpre{t} = \hpre{t-1}a_ts_t$ to denote a history of length $t-1$ where the action $a_t$ and state $s_t$ are concatenated to create a history of length $t$. Similarly, we have that $\wmfunc(s\mid \hpre{t-1}, a_t) = \wmfunc(s \mid s_{t-1}, a_t)$ as $\tranf$ is Markovian (see Definition~\ref{def:cMP}).

Therefore, we can write $\lossg$ in Equation~\eqref{eq:global-loss} as,
\begin{align}
  \lossg (\wmfunc)
  & = \sum_{k=0}^{t-1} \sum_{\hpre{k}} p(\hpre{k}\mid \tranf, \polunif) \sum_a \discrep\big(\wmfunc(\cdot \mid \hpre{k}, a), \, \tranf(\cdot \mid \hpre{k}, a)\big), \nonumber \\
  & = \sum_{k=0}^{t-1} \sum_{\hpre{k}} \sum_a p(\hpre{k} \mid \tranf, \polunif)\, \discrep\big(\wmfunc(\cdot \mid s_{k}, a), \, \tranf(\cdot \mid s_{k}, a)\big).
\end{align}
Note that $s_{k}$ in the equations above is the $k$-th (last) state of the history $\hpre{k}$.
By using $h_{k}^s \in \hist_k^s = \{ h_k\in\hist_k: s_k = s \}$ to denote a history of length $k$ where the last state in the history is $s$, we can write the equation above as,
\begin{align}
  \lossg (\wmfunc) & = \sum_{k=0}^{t-1} \sum_a \sum_s \left(\sum_{\hpre{k}^s\in\hist_k^s}  p(\hpre{k}^s \mid \tranf, \polunif) \right) \discrep\big(\wmfunc(\cdot \mid s, a), \, \tranf(\cdot \mid s, a)\big) \nonumber \\
                   & = \sum_a \sum_s \discrep\big(\wmfunc(\cdot \mid s, a), \, \tranf(\cdot \mid s, a)\big) \sum_{k=0}^{t-1} \sum_{\hpre{k}^s\in\hist_k^s}  p(\hpre{k}^s \mid \tranf, \polunif) \nonumber \\
                   & = \sum_a \sum_s \discrep\big(\wmfunc(\cdot \mid s, a), \, \tranf(\cdot \mid s, a)\big) \underbrace{\sum_{k=0}^{t-1} \Pr(S_k = s \mid \polunif, \tranf)}_{\envweightS(s)}.
\end{align}
Note that, $\envweightS(s)$ is the expected probability of visiting state $s$ in the steps $k\in[1, t)$, and can be further simplified under the Markovian assumption (see Definition~\ref{def:cMP}).
\begin{remark} \label{rem:envweight}
  Let $\mathbf{P} \in [0, 1]^{|\states|\times\|states|}$, where each value is given by,
  \begin{equation}
    \mathbf{P}_{ss'} = \frac{1}{|\actions|} \sum_a \tranf(s'\mid s, a).
  \end{equation}
  Recall, $\hpre{0} = s_0$, thus,
  \begin{equation}
    \envweightS(s) = \sum_{k=0}^{t-1} (\mathds{1}_{s_0} \mathbf{P}^k)_{s}
  \end{equation}
  where $\mathds{1}_{s_0}$ is a vector of zeros of length $|\states|$ except in the index corresponding to $s_0$, where it takes the value 1.
\end{remark}

This completes the proof for Proposition~\ref{prop:markov-gloss}.

\subsection{Proof of Theorem~\ref{theo:main}} \label{apx:proof-theo-lp}
\begin{proof}

Assuming the squared $L_2$ norm as the discrepancy function $\discrep$ in Equation~\eqref{eq:markov-gloss} we can expand the optimization problem from Equation~\eqref{eq:optimal-explorers} as,
\begin{align} \label{eq:obj-argmin}
    \policyset^* & = \underset{\policy\in\policyset}{\arg \min}\ \mathbb{E}_{\rvhist_{1:t} \mid \policy, \tranf} \left[ \sum_{i=1}^t \sum_{s,a} \envweightS(s) \parallel \wmfunc_{\rvhist_{1:i}}(\cdot\mid s, a) - \tranf(\cdot\mid s, a) \parallel_2^2 \right] \nonumber \\
    & = \underset{\policy\in\policyset}{\arg \min}\ \sum_{i=1}^t \sum_{s',a,s}\envweightS(s) \, \mathbb{E}_{\rvhist_{1:t} \mid \policy, \tranf} \left[  \left(\wmfunc_{\rvhist_{1:i}}(s' \mid s, a) - \tranf(s'\mid s, a)\right)^2 \right].
\end{align}

In the equation above, the function $\wmfunc_{H_{1:i}}$ is defined by Equation~\ref{eq:wm-freqs}, that can be written as,
\begin{equation}
  \label{eq:fxn}
  \wmfunc_{\rvhist_{1:i}} = \frac{X + \alpha}{N + \alpha |\states|},
\end{equation}
where $N$ and $X$ are the random variables for $\freq(s, a; \rvhist_{1:i})$ and $\freq'(s', s, a; \rvhist_{1:i})$ respectively. Furthermore, for the sake of clarity, let $p = \tranf(s'\mid s, a)$ for the rest of this section.

\begin{remark}\label{rem:n_binom2}
Conditioned on $N = n$, we have that $X \mid (N = n) \sim \binomial(n, p)$.
\end{remark}

Therefore, by conditioning on $N = n$ the expected value, variance, and bias of the estimator $\wmfunc_{H_{1:i}}$ are given by,
\begin{align}
  \expect{H_{1:t}\mid \lambda, \tranf}{\wmfunc_{H_{1:i}}(s'\mid s, a) \mid N = n} & = \frac{\expect{H_{1:t}\mid \lambda, \tranf}{X \mid N = n} + \alpha}{n + \alpha |\states|} = \frac{n p + \alpha}{n + \alpha |\states|}, \label{eq:fxn-exp} \\
  \variance{H_{1:t}\mid \lambda, \tranf}{\wmfunc_{H_{1:i}}(s'\mid s, a) \mid N = n} & = \frac{\variance{H_{1:t}\mid \lambda, \tranf}{X \mid N = n} }{(n + \alpha |\states|)^2}  = \frac{n p (1-p)}{(n + \alpha |\states|)^2}, \label{eq:fxn-var} \\
  \bias{\wmfunc_{H_{1:i}}(s'\mid s, a) \mid N = n} & = \frac{n p+ \alpha}{n + \alpha |\states|} - p  = \frac{\alpha (1-|\states| p)}{n + \alpha |\states|}. \label{eq:fxn-bias}
\end{align}
Thus, using the bias-variance decomposition on the expected value in objective function from Equation~\eqref{eq:fxn}conditioned on $N = n$,
\begin{align} \label{eq:fxn-bias-var}
  & \mathbb{E}_{\rvhist_{1:t} \mid \policy, \tranf} \left[  \left(\wmfunc_{\rvhist_{1:i}}(s' \mid s, a) - \tranf(s'\mid s, a)\right)^2 \mid N = n \right] \nonumber \\
  & \quad = \variance{H_{1:t}\mid \lambda, \tranf}{\wmfunc_{H_{1:i}}(s'\mid s, a) \mid N = n} + \bias{\wmfunc_{H_{1:i}}(s'\mid s, a) \mid N = n}^2 \nonumber \\
  & \quad = \frac{n p (1-p)}{(n+\alpha |\states|)^2} + \frac{\alpha^2(1-|\states|p)^2}{(n+\alpha |\states|)^2}.
\end{align}
Note that we can recover the original MSE expression from Equation~\eqref{eq:obj-argmin} by taking the expectation over $N$ in Equation~\eqref{eq:fxn-bias-var}. Namely,
\begin{equation}
  \label{eq:expec-n}
  \mathbb{E}_{\rvhist_{1:t} \mid \policy, \tranf} \left[  \left(\wmfunc_{\rvhist_{1:i}}(s' \mid s, a) - p \right)^2  \right] = \expect{N\mid \policy, \tranf}{\mathbb{E}_{\rvhist_{1:t} \mid \policy, \tranf} \left[  \left(\wmfunc_{\rvhist_{1:i}}(s' \mid s, a) - p \right)^2 \mid N = n \right]}.
\end{equation}

\begin{remark}
For the sake of completeness, note that in the Equation~\eqref{eq:expec-n} above,
\begin{equation}\label{eq:prob-N}
     N \sim p(N = n \mid \policy, \tranf) = \expect{H_{1:t} \mid \policy, \tranf}{\mathds{1}(\freq(s, a; H_{1:i}) = n)},
\end{equation}
and the probability distribution $p(\hpre{t} \mid \policy, \tranf)$ over which the expectation is defined is specified in Section~\ref{sec:theory}.
\end{remark}

By replacing the result from Equation~\eqref{eq:fxn-bias-var} into the expression above,
\begin{equation}
  \label{eq:exp-deco1}
  \mathbb{E}_{\rvhist_{1:t} \mid \policy, \tranf} \left[  \left(\wmfunc_{\rvhist_{1:i}}(s' \mid s, a) - p \right)^2  \right] =
  \expect{N\mid \policy, \tranf}{\frac{N p (1-p)}{(N+\alpha |\states|)^2}} + \expect{N\mid \policy, \tranf}{\frac{\alpha^2(1-|\states|p)^2}{(N+\alpha |\states|)^2}}.
\end{equation}
Since $p$, $\alpha$, and $|\states|$ do not depend on $N$, these can be factored out as,
\begin{align}
  \label{eq:exp-deco2}
  & \mathbb{E}_{\rvhist_{1:t} \mid \policy, \tranf} \left[  \left(\wmfunc_{\rvhist_{1:i}}(s' \mid s, a) - p \right)^2  \right] \nonumber \\
  & \quad = p (1-p) \expect{N\mid \policy, \tranf}{\frac{N}{(N+\alpha |\states|)^2}} + \alpha^2(1-|\states|p)^2 \expect{N\mid \policy, \tranf}{\frac{1}{(N+\alpha |\states|)^2}}.
\end{align}
Now by replacing this result in Equation~\eqref{eq:obj-argmin} in the beginning of this section,
\begin{align}
  \policyset^* =\underset{\policy\in\policyset}{\arg \min} \sum_{i=1}^t \sum_{s', a, s} \envweightS(s) \, p (1-p) \expect{N\mid \policy, \tranf}{\frac{N}{(N+\alpha |\states|)^2}} & \nonumber \\
   +\, \alpha^2(1-|\states|p)^2 \expect{N\mid \policy, \tranf}{\frac{1}{(N+\alpha |\states|)^2}} &.
\end{align}
Recall that $p = \tranf(s'\mid s, a)$ and $N = \freq(s, a; \rvhist_{1:i})$. Moreover, the $p (1-p)$ expression above resembles the Gini impurity, $\gini(s,a) = \sum_{s'}\tranf(s'\mid s, a) (1-\tranf(s'\mid s, a))$. Thus, we can equivalently write,
\begin{align}
  \policyset_t^* = \underset{\policy\in\policyset}{\arg \min} \sum_{i=1}^t \sum_{a, s} \envweightS(s) \, \gini(s, a) \, \expect{N\mid \policy, \tranf}{\frac{N}{(N+\alpha |\states|)^2}} \nonumber \\
  +\, \envweightS(s)\, \alpha^2 \unifbias(s,a)\, \expect{N\mid \policy, \tranf}{\frac{1}{(N+\alpha |\states|)^2}},
\end{align}
where $\unifbias(s,a) = \sum_{s'} (1-|\states| \tranf(s'\mid s, a))^2$ is the bias term. This completes the proof of Theorem~\ref{theo:main}.
\end{proof}

\section{Experimentation details} \label{apx:exp-impl}

\subsection{\texttt{SmallWorld} environment} \label{apx:smallworld}

All empirical results reported in this section are obtained in \texttt{SmallWorld}, an environment suite developed specifically for this study. \texttt{SmallWorld} consists of grid-based worlds in which each cell is assigned a scalar value in the range $[-1,1]$. This value determines the cell type: negative values correspond to solid (impenetrable) cells, such as walls or obstacles; zero denotes floor cells; and positive values represent other entities, such as items, collectible objects, or agents. Different agents may be assigned different cell values, which makes them distinguishable from one another.

Observations are real-valued square matrices corresponding to agent-centered slices of the grid. For example, when the observation window has size $7$, each observation is a $7 \times 7$ matrix, and the entry at position $(3,3)$ always contains the value of the agent's own cell. When an agent is close to a boundary of the environment, portions of the observation window that fall outside the grid are filled with a designated padding value of $-1$. Actions are discrete, and all environments considered in this section share the same action set: move up, down, left, or right. Figure~\ref{fig:envs-show} illustrates the environments used in our experiments.

\paragraph{Empty} This environment consists of an empty $10 \times 10$ grid-world containing a single agent with partial observations as described above. The agent's initial position is sampled uniformly at random.

\paragraph{Maze} This environment is a $10 \times 10$ grid containing solid obstacles arranged as a simple maze with four rooms of different sizes. The agent's initial position is randomized.

\paragraph{Blocks} This environment comprises a $16 \times 15$ grid-world that is empty except for three $2 \times 2$ solid blocks. As in the previous environments, the agent's initial position is randomized.

\paragraph{Random Colors} This environment consists of a $15 \times 11$ grid with a large vertical corridor and two rooms located in the upper third of the map. The cell values of both rooms are assigned randomly at each timestep, with all cells within a given room sharing the same value. The left room takes values in $\{0.6, 0.7\}$ with equal probability, whereas the right room takes values in $\{0.6, 0.7, 0.8, 0.9\}$, also uniformly at random. The agent always starts at the bottom-center position shown in Figure~\ref{fig:env-rand-colors}.


Finally, although the visualizations in Figure~\ref{fig:envs-show} are shown in color for illustrative purposes, the underlying environments are not represented using RGB values; instead, they are defined by scalar cell values in the range $[-1,1]$.

\begin{figure}
     \centering
     \begin{subfigure}[b]{0.24\textwidth}
         \centering
         \includegraphics[width=\textwidth]{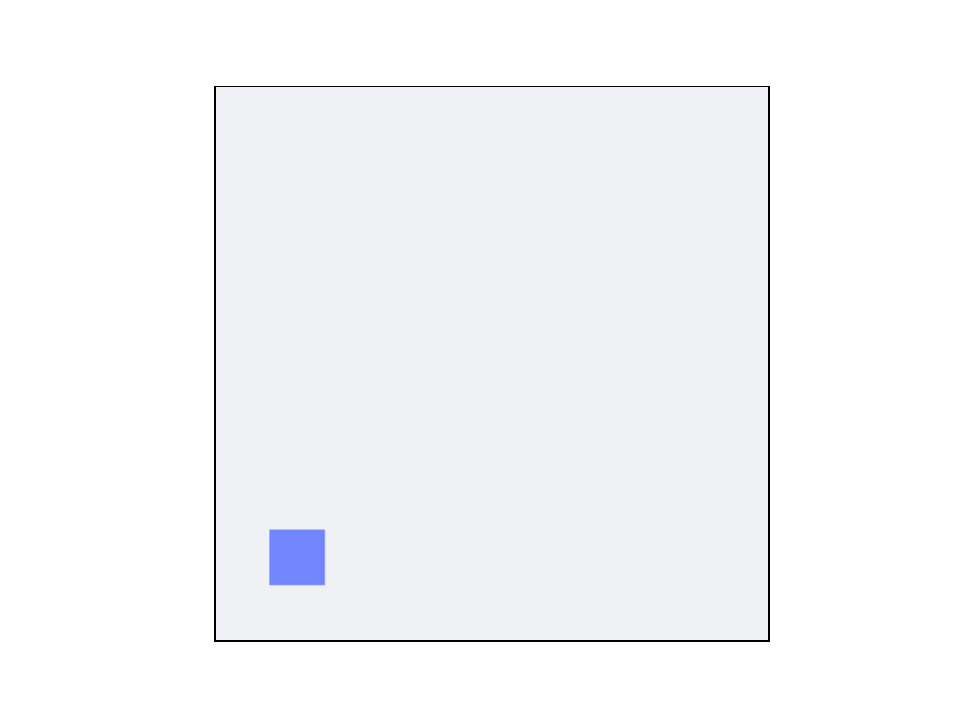}
         \caption{\textit{Empty}}
         \label{fig:env-empty}
     \end{subfigure}
     \begin{subfigure}[b]{0.24\textwidth}
         \centering
         \includegraphics[width=\textwidth]{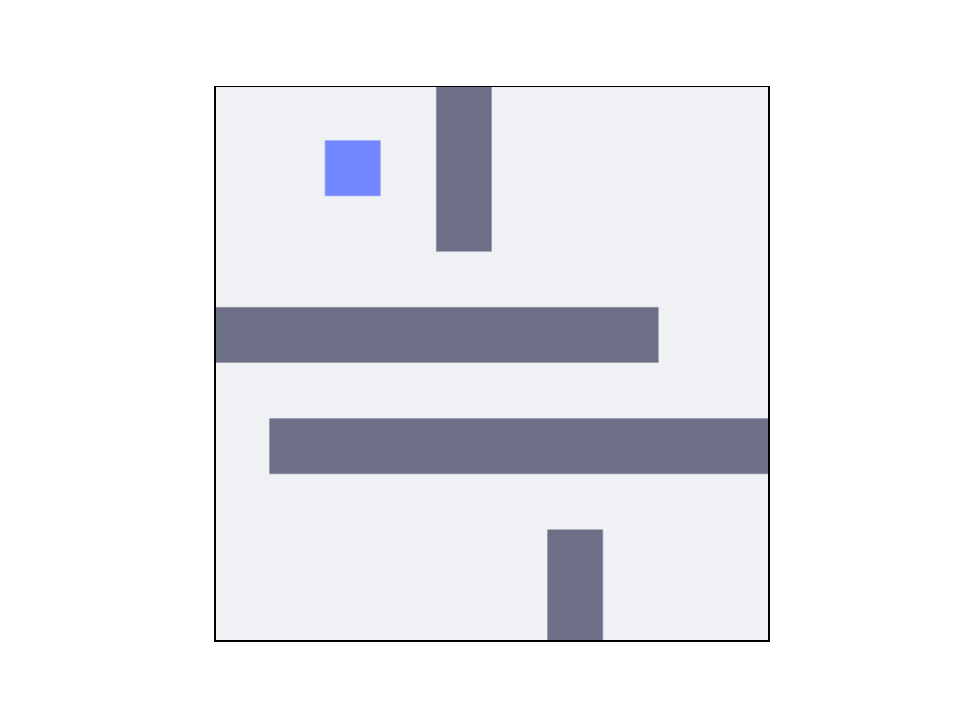}
         \caption{\textit{Maze}}
         \label{fig:env-maze}
     \end{subfigure}
     \begin{subfigure}[b]{0.24\textwidth}
         \centering
         \includegraphics[width=\textwidth]{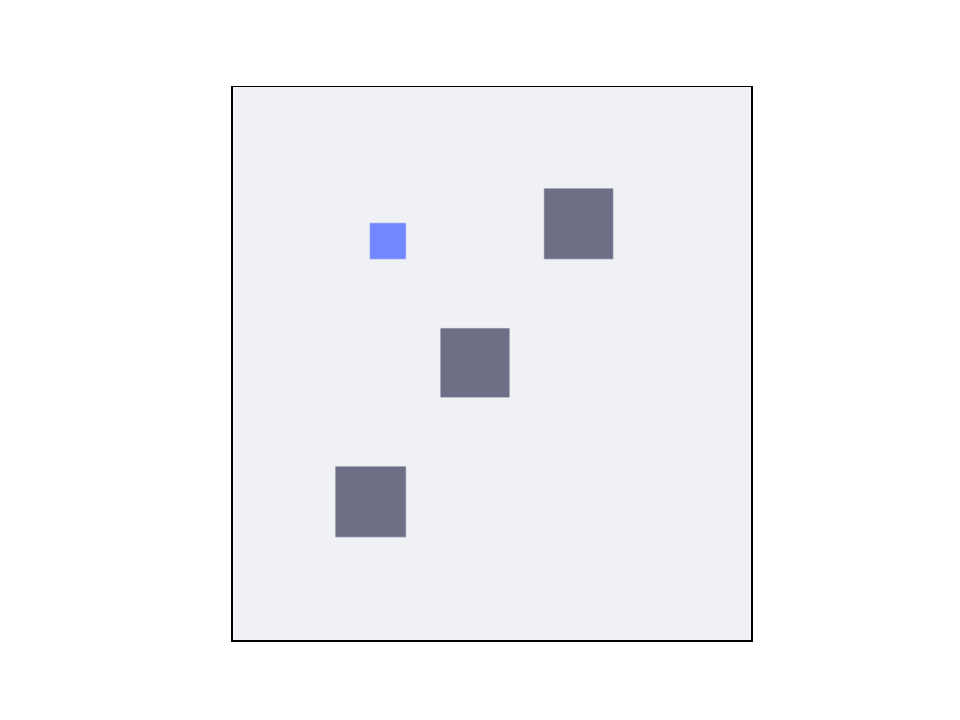}
         \caption{\textit{Blocks}}
         \label{fig:env-blocks}
     \end{subfigure}
     \begin{subfigure}[b]{0.24\textwidth}
         \centering
         \includegraphics[width=\textwidth]{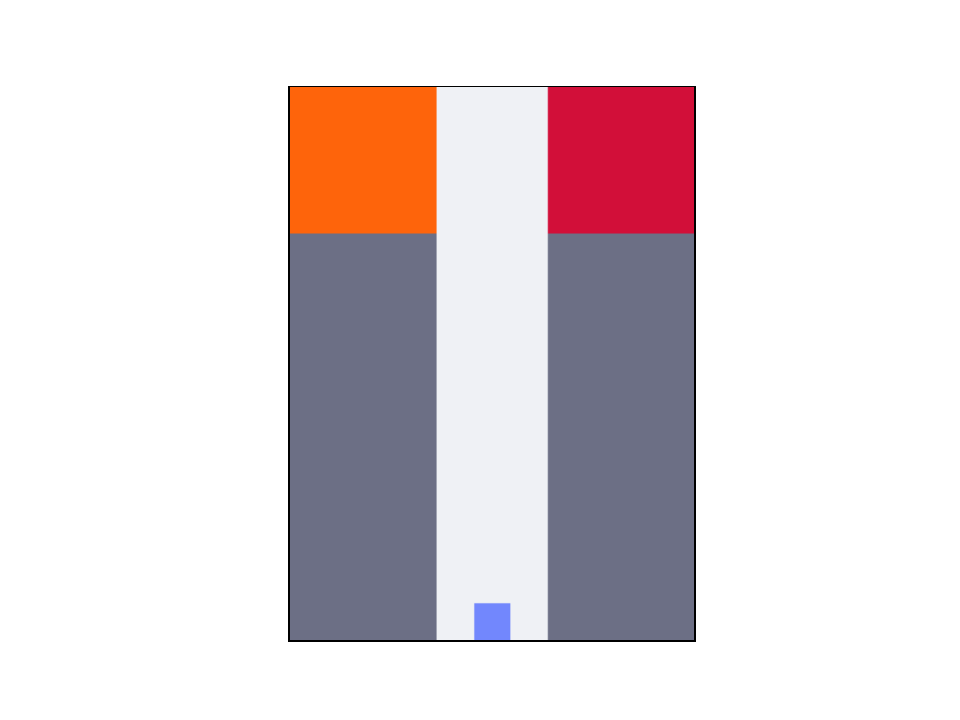}
         \caption{\textit{RandColors}}
         \label{fig:env-rand-colors}
     \end{subfigure}
     \caption{Sample illustrations of the employed environments. The dark gray color \minibox{6c6f85} indicates obstacles, while single cells colored in blue \minibox{7287fd} correspond to agents, the rest of the colors indicate traversable space (i.e., left and right rooms in \textit{RandColors}).}
     \label{fig:envs-show}
   \end{figure}

\subsection{Approximating the ECE} \label{apx:approx-method}

Exactly computing the ECE of an agent (Equation~\eqref{eq:efficiency}) is analytically intractable for most environments of interest, including the relatively simple domains within \texttt{SmallWorld}. As noted in Section~\ref{sec:experiments}, we approximate the ECE via Monte Carlo estimation, averaging over history samples drawn from $p(h \mid \polunif, \env)$ and $p(h \mid \policy, \env)$ to estimate the expectations in Equations~\eqref{eq:global-loss} and \eqref{eq:efficiency}.

However, relying only on trajectories sampled purely from the uniform agent $p(h \mid \polunif, \env)$ introduces some practical challenges. Specifically, in some environments, random walks can predominantly generate relatively predictable trajectories. As the ES progresses and the agents' world models improve, the difference in ECE between competing high-performing agents becomes negligible on most of these trajectories. Consequently, the number of uniform samples required to maintain sufficient evaluation resolution---that is, to accurately distinguish which agents in the population have the lowest ECE---increases.

To mitigate this sample inefficiency, for some environments, we augment the evaluation set used to estimate Equation~\eqref{eq:global-loss} with some histories sampled from systematic deviations of the current population's agents.
Let $\devi : \policyset \rightarrow \policyset$ denote a deviation function that applies a structural modification to an agent \citep{elelimy2025rethinking}.
As the $\polunif$ assigns a non-zero probability to any sequence of actions, any trajectory sampled from $p(h\mid \devi(\policy), \env)$ remains strictly within the support of $p(h \mid \polunif, \env)$. By explicitly drawing some histories from these deviated distributions alongside uniform samples, we ensure that the finite evaluation batch covers regions of the trajectory space where the performance of competing agents actually diverges, stabilizing and improving the convergence of the ES for some environments.

We implement these deviations as simple stochastic perturbations of the agents' behavior: at each step, with probability $\alpha$, the action is sampled uniformly at random, and with probability $1-\alpha$, the agent's prescribed action is executed. Note that a perturbation of $\alpha = 0$ corresponds to the identity function---i.e., executing the unmodified agent---and that if $\alpha = 1$ then $\devi_\alpha(\policy) = \polunif$ (which is used for most cases). We adopt this straightforward perturbation class for simplicity, but richer families of deviations could be integrated for the same purpose \citep{morrill2021efficient}.

\subsection{Method overview} \label{apx:method-overview}

\begin{algorithm}[ht]
\caption{Optimizing towards Efficient Exploration with ES}
\begin{algorithmic}[1]  \label{alg:es_ece}
\REQUIRE Environment $\env$, Population size $N$, Generations $G$, Time horizon $T$, Number of repetitions $R$, Perturbation rates $\mathcal{R} = \{\alpha_1, \dots, \alpha_k\}$
\STATE Initialize Evolutionary Strategy (ES) parameters $\theta$ (e.g., mean and covariance)
\FOR{$g = 1$ \TO $G$}
    \STATE Sample population of agents $\population = \{\policy_1, \dots, \policy_N\}$ using $\text{ES}(\theta)$

    \STATE \algocomment{Construct evaluation set}
    \STATE $\hist_{\text{eval}} \gets \emptyset$
    \FOR{$\policy \in P$}
        \FOR{$\alpha \in \mathcal{R}$}
            \STATE Sample many evaluation histories $\hist_t^{\devi_\alpha(\policy)} \sim p(\cdot \mid \devi_\alpha(\policy), \env)$
            \STATE $\hist_{\text{eval}} \gets \hist_{\text{eval}} \cup \hist_t^{\devi_\alpha(\policy)}$
        \ENDFOR
    \ENDFOR

    \STATE \algocomment{Estimate ECE}
    \FOR{$r=1$ \TO $R$}
        \STATE \algocomment{Collect agent-specific training data}

        \STATE $\mathcal{T} \gets \emptyset$
        \FOR{$\policy \in \population$}
          \STATE Sample history $\hpre{T}^\policy \sim p(h \mid \policy, \env)$
          \STATE $\mathcal{T} \gets \mathcal{T} \cup \{\hpre{T}^\policy\}$
        \ENDFOR

        \STATE \algocomment{Train world models}
        \FOR{$\policy \in \policyset$}
            \STATE $\varepsilon_\policy^r \gets 0$
            \FOR{$k = 1$ \TO $T$}
                \STATE $f_k \gets \paramoptimi(\hpre{k}^\policy)$, where $\hpre{k}^\policy$ is the $k$-th prefix of the training data $\hpre{T}^\policy\in\mathcal{T}$ for agent $\policy$.
                \STATE $\varepsilon_\policy^r \gets \varepsilon_\policy^r +  \frac{1}{|\hist_{\text{eval}}|} \sum_{h \in \hist_{\text{eval}}} \ell(f_k, h)$ \label{line:est-gloss}
            \ENDFOR
        \ENDFOR
    \ENDFOR
    \FOR{$\policy \in \policyset$}
      \STATE $\varepsilon_\policy \gets \frac{1}{R} \sum_{r=1}^R \varepsilon_\policy^r$ \label{line:est-ece}
    \ENDFOR
    \STATE \algocomment{Update population}
    \STATE Update ES parameters $\theta \gets \text{ES\_Update}(\theta, \{\varepsilon_{\policy_1}, \dots, \varepsilon_{\policy_N}\})$
\ENDFOR
\RETURN Agent $\policy^* \in \population$ with the lowest $\varepsilon_{\policy^*}$
\end{algorithmic}
\end{algorithm}

Algorithm~\ref{alg:es_ece} provides an overview of the method used to estimate the ECE and optimize a population of agents for exploration efficiency. Specifically, the global loss in Equation~\eqref{eq:global-loss} is approximated on line~\ref{line:est-gloss}, and the ECE is estimated on line~\ref{line:est-ece}. Note that this pseudocode is a simplification of the highly optimized JAX implementation \citep{jax2018github} used for our experiments. In practice, the history sampling process and world model training are highly parallelized. To further improve computational efficiency, our implementation initializes the training of the current world model using parameters learned during the previous update step. Additionally, rather than training world models for every single history prefix, we train them at batched intervals (of size $t$ divided by the total number of world model training steps, see Table~\ref{tab:hparams}) using progressively more data. Consequently, the world models are updated less frequently than the agent takes steps in the environment.

For the specific components in Algorithm~\ref{alg:es_ece}, the loss function $\ell$ is the Mean Squared Error (MSE) between the observed environment transitions and the world model predictions. The inner-loop optimization of the world models, $\paramoptimi$, uses AdamW \citep{loshchilov2018adamw}. Finally, for the outer-loop evolutionary strategy (ES), we use OpenES \citep{salimans2017evolution} though the implementation provided by the \texttt{evosax} library \citep{lange2023evosax}.

\subsection{Hyperparameters}\label{apx:hparams}

We provide the list of employed hyperparameters in Table~\ref{tab:hparams}. In the case of the view window size of the agents, we maintain a default $5\times 5$ window unless mentioned otherwise. Specifically, we only employ the $7\times 7$ window size in the experiments for the \textit{Empty} environment in Figure~\ref{fig:results-empty-blocks}. Regarding the perturbation probabilities  $\alpha$ (see Appendix~\ref{apx:approx-method}), Table~\ref{tab:alphas} shows the different values employed per environment. Note that for the environments \textit{Blocks} and \textit{Maze} we exclusively employ $\alpha = 1$, which corresponds to sampling histories only by random walks (see Appendix~\ref{apx:approx-method} for details).

As mentioned in Section~\ref{sec:experiments}, we represent agents and world models as NNs. In the case of agents, we employ a single Feed-Forward (FF) layer followed by a GRU module and two FF layers. The input to the agent's network is the flattened $N \times N$ matrix corresponding to the current observation window, whose values are in the $[-1, 1]$ range. For world models, we employ a fully-connected MLP architecture, where the inputs are flattened $S\times N \times N$ observation matrices ($S$ is the sequence length) concatenated with a flattened matrix of $S \times |\actions|$ one-hot encoded actions.

\begin{table}[]
\centering
\caption{Hyperparameters shared across all experiments in Section~\ref{sec:experiments} and Appendix~\ref{apx:more-results}.}
\label{tab:hparams}
\begin{tabular}{@{}cll@{}}
\toprule
                              & \textsc{Name}                   & \textsc{Value}       \\ \midrule
\textsc{General}              & \textsc{Time horizon ($t$)}    & \textsc{512}         \\ \midrule
\multirow{4}{*}{\textsc{Agents}}       & \textsc{Hidden size}            & \textsc{128}         \\
                              & \textsc{Number of layers}      & \textsc{4}           \\
                              & \textsc{Activation function}      & \textit{ReLU}           \\
                              & \textsc{Default view size}     & \textsc{$5\times 5$} \\ \midrule
\multirow{7}{*}{\textsc{World models}} & \textsc{Number of layers}      & \textsc{4}           \\
                              & \textsc{AdamW learning rate}   & \textsc{0.005}       \\
                              & \textsc{Evaluation batch size} & \textsc{512}         \\
                              & \textsc{Training steps}        & \textsc{256}         \\
                              & \textsc{Hidden size}            & \textsc{128}         \\
                              & \textsc{Activation function}      & \textit{ReLU}           \\
                              & \textsc{Sequence length}       & \textsc{8}           \\ \midrule
\multirow{7}{*}{\textsc{OpenES}}       & \textsc{Population size}       & \textsc{128}         \\
                              & \textsc{Number of generations} & \textsc{2000}        \\
                              & \textsc{Number of repetitions} & \textsc{16}          \\
                              & \textsc{Learning rate start}   & \textsc{0.01}        \\
                              & \textsc{Learning rate decay}   & \textsc{0.1}         \\
                              & \textsc{Starting std}          & \textsc{0.05}        \\
                              & \textsc{Std decay}             & \textsc{0.2}         \\ \bottomrule
\end{tabular}
\end{table}

\begin{table}[]
\centering
\caption{List of employed random perturbations for estimating the ECE (see Appendix~\ref{apx:approx-method}).}
\label{tab:alphas}
\begin{tabular}{@{}cc@{}}
\toprule
\textsc{Environment} & $\alpha$ \textsc{values}            \\ \midrule
\textsc{Empty}       & $\{0.1, 0.25, 0.5, 0.75, 1, 1 ,1 ,1\}$ \\
\textsc{Blocks}      & $\{1, 1, 1, 1, 1, 1, 1, 1\}$                       \\
\textsc{Maze}        & $\{1, 1, 1, 1, 1, 1, 1, 1\}$                       \\
\textsc{RandColors}  & $\{0.0, 0.5, 0.75, 1, 1, 1, 1,1\}$       \\ \bottomrule
\end{tabular}
\end{table}

\subsection{Hardware details}\label{apx:hardware}

Experiments have been conducted in a single cluster node with eight Nvidia A5000 GPUs, an AMD EPYC 7252 CPU, and 377GB of RAM. Experiments were parallelized to run across the eight or four GPUs, where a single run required approximately 10 hours.

\subsection{Extended results}\label{apx:more-results}

\begin{figure}
     \centering
     \begin{subfigure}[b]{0.22\textwidth}
         \centering
         \includegraphics[height=25mm]{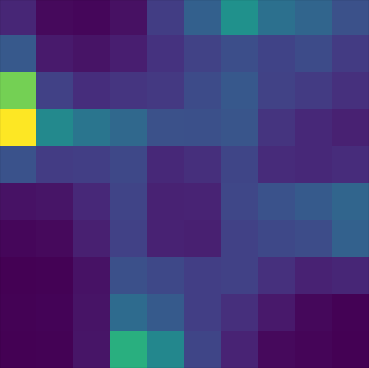}
         \caption{\textit{Empty}}
         \label{fig:res-empty}
     \end{subfigure}
     \hfill
     \begin{subfigure}[b]{0.22\textwidth}
         \centering
         \includegraphics[height=25mm]{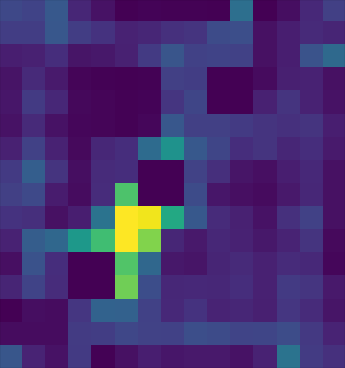}
         \caption{\textit{Blocks}}
         \label{fig:res-blocks}
     \end{subfigure}
     \hfill
     \begin{subfigure}[b]{0.22\textwidth}
         \centering
         \includegraphics[height=25mm]{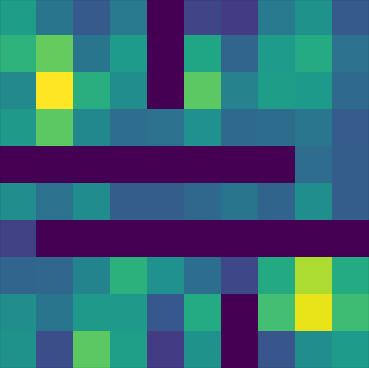}
         \caption{\textit{Maze}}
         \label{fig:res-maze}
     \end{subfigure}
     \hfill
     \begin{subfigure}[b]{0.22\textwidth}
         \centering
         \includegraphics[height=25mm]{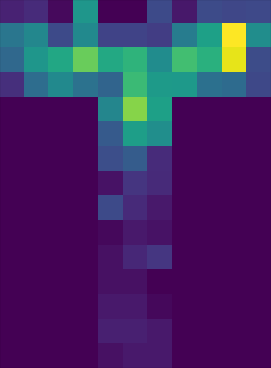}
         \caption{\textit{RandColors}}
         \label{fig:res-randcolors}
     \end{subfigure}
     \caption{Per-cell visitation frequencies of the best exploring agents found by the ES for each \texttt{SmallWorld} environment. Lighter colors indicate more visits.}
     \label{fig:res-sa}
   \end{figure}

In this appendix section we provide additional experiments, figures, and extend discussions on the results shown in Section~\ref{sec:experiments} of the main paper.

Figure~\ref{fig:res-sa} presents the per-cell visitation heatmaps of the most efficient agents discovered by the ES across all four environments (introduced in Section~\ref{sec:experiments} and detailed in Appendix~\ref{apx:smallworld}).
In the \textit{Empty} environment (Figure~\ref{fig:res-empty}), the most efficient exploration strategy converges on a helix-like cyclic pattern that largely avoids both the central region and the immediate edges of the room. The evolution from simpler to more structured behaviors is illustrated in Figure~\ref{fig:fs-curve}.
\begin{wrapfigure}[15]{r}{0.4\textwidth}
    \centering
    \includegraphics[width=\linewidth]{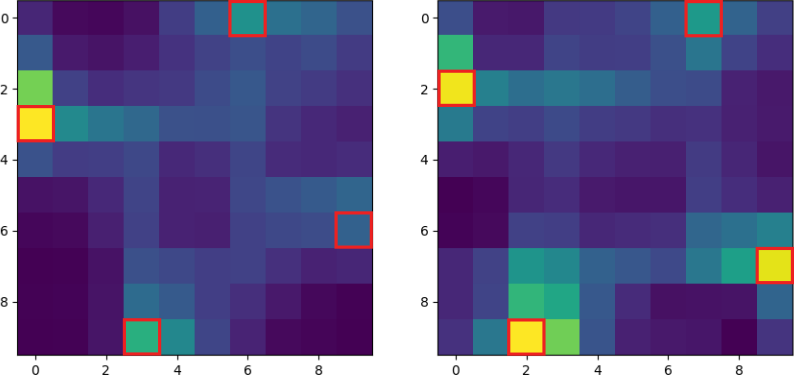}
    \vskip -0.2em
    \caption{Visitation heatmaps of efficiently exploring agents in the \textit{Empty} environment. Both runs share all hyperparameters except the agent's observation window size, which is $7 \times 7$ in the left plot and $5 \times 5$ in the right plot. Red boxes indicate, for each border, the cell with the highest visitation frequency.}
    \vskip -0.2em
    \label{fig:empty-compare}
\end{wrapfigure}
Furthermore, Figure~\ref{fig:empty-compare} illustrates that the geometry of this helix is dependent on the agent's partial observability, i.e., the observation window.
Specifically, the distance between the most frequently visited cells and the nearest wall is consistently $(n - 1)/2$ cells, where $n$ is the size of the observation window.
We hypothesize that this behavior emerges as the center and the extreme edges lead to highly repetitive, easily predictable observations. Instead, the agent actively targets regions that maximize informativeness. The straight traversals forming the helix correspond exactly to paths where the walls fall just outside the agent's field of view. Navigating these specific paths makes predicting the approaching wall significantly more difficult, providing the data necessary for the world model to learn the environment's wall-to-wall dimensions.

\begin{wrapfigure}[16]{r}{0.4\textwidth}
    \vspace{-\baselineskip} 
    \centering
    \includegraphics[width=\linewidth]{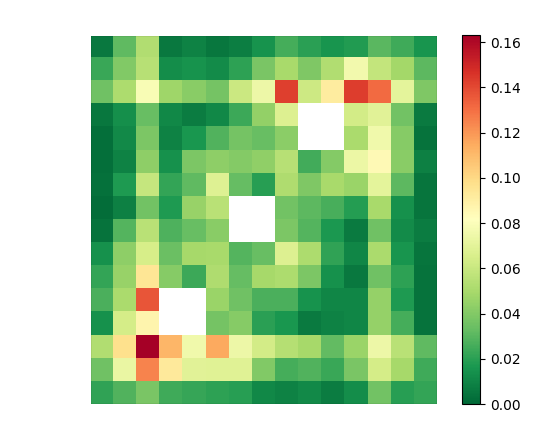}
    \vskip -0.2em
    \caption{World model's error across the \textit{Blocks} environment after training on histories sampled from the behavior shown in Figure~\ref{fig:res-blocks}.}
    \vskip -0.2em
    \label{fig:errors-blocks}
\end{wrapfigure}
The result for the \textit{Blocks} environment is shown in Figure~\ref{fig:res-blocks}. Consistent with the \textit{Empty} results, the agent spends few timesteps in the open spaces between the blocks and the outer boundaries. However, in this case, it concentrates most of its interactions within the region between the lower and middle blocks. Sampling this specific area allows the world model to accurately predict the dynamics of the identical upper gap without requiring direct exploration. The error map in Figure~\ref{fig:errors-blocks} illustrates this generalization, showing that the model achieves low error across the gaps between the blocks while most of the prediction error accumulates in the gaps between the lower and upper blocks and the nearest corners, where the symmetry no longer holds.

In the \textit{Maze} environment, the absence of spatial symmetries prevents the agent from exploiting structural regularities. Consequently, efficient exploration requires comprehensive coverage of the environment, resulting in the approximately uniform visitation distribution shown in Figure~\ref{fig:res-maze}. However, this global coverage is far from a random walk, as can be seen in Figure~\ref{fig:results-maze-randcolors} and discussed in Section~\ref{sec:results1} of the main paper. Instead, agents consistently exploit a highly structured trajectory, systematically navigating the entire maze regardless of its starting position. Upon reaching one end of the maze, the agent reverses course and traverses it in the opposite direction, establishing a continuous, cyclic behavior.

Finally, in the \textit{Random Colors} environment, agents predominantly concentrate interaction on the upper region (both rooms), which presents a greater modeling challenge due to its inherent stochasticity, see Figure~\ref{fig:res-randcolors}. Moreover, as can be observed in the figure, although agents collect data from both rooms, they spend more total time in the rightmost room. This behavior directly reflects the irreducible error of the different regions of the environment: the best achievable error in the right room ($0.0125$) is five times the left room's ($0.0025$). This visiting distribution shows that the agents' overall time allocation is intrinsically shaped by local modeling difficulty, aligning with our theoretical results in Section~\ref{sec:theory}. In conclusion, agents prioritize visiting the most informative and easy to learn regions first (see agent's trajectory in Figure~\ref{fig:results-maze-randcolors}), and schedule the remaining timesteps modeling more challenging regions of the environment . Refer to Section~\ref{sec:results1} for additional discussion on the topic.

\begin{figure}
    \centering
    \includegraphics[width=\linewidth]{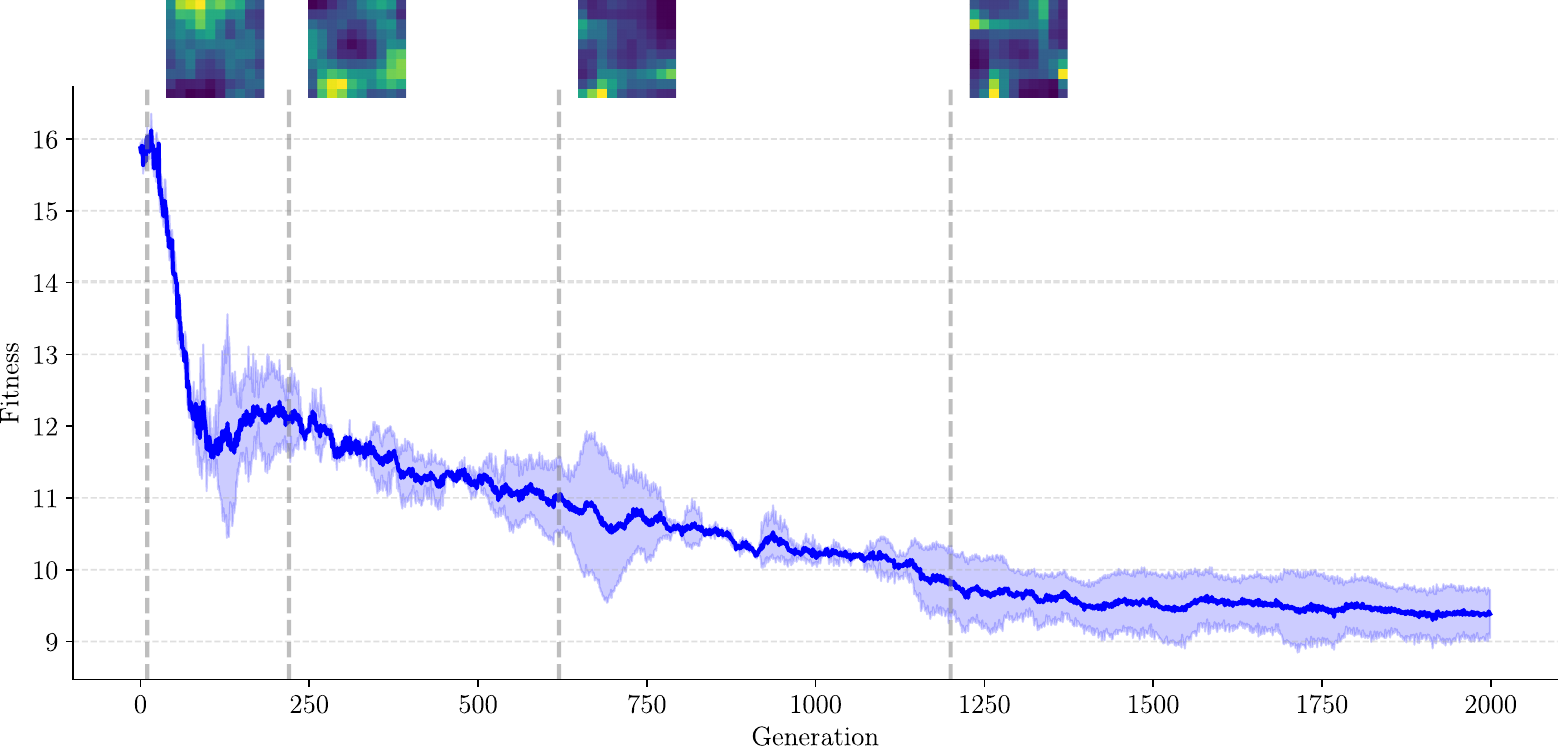}
  \caption{Evolution of the population's average fitness through ES generations in the \textit{Empty} environment with a $5\times 5$ observation window. The blue line indicates the mean value across 3 different seeds, where the shadow corresponds the standard deviation. The figure also shows four different per-cell visitation heatmaps of the most efficient explorers found at the corresponding generations, showing how agents converge to increasingly structured behaviors.}
  \label{fig:fs-curve}
\end{figure}

\begin{figure}
    \centering
    \includegraphics[width=\linewidth]{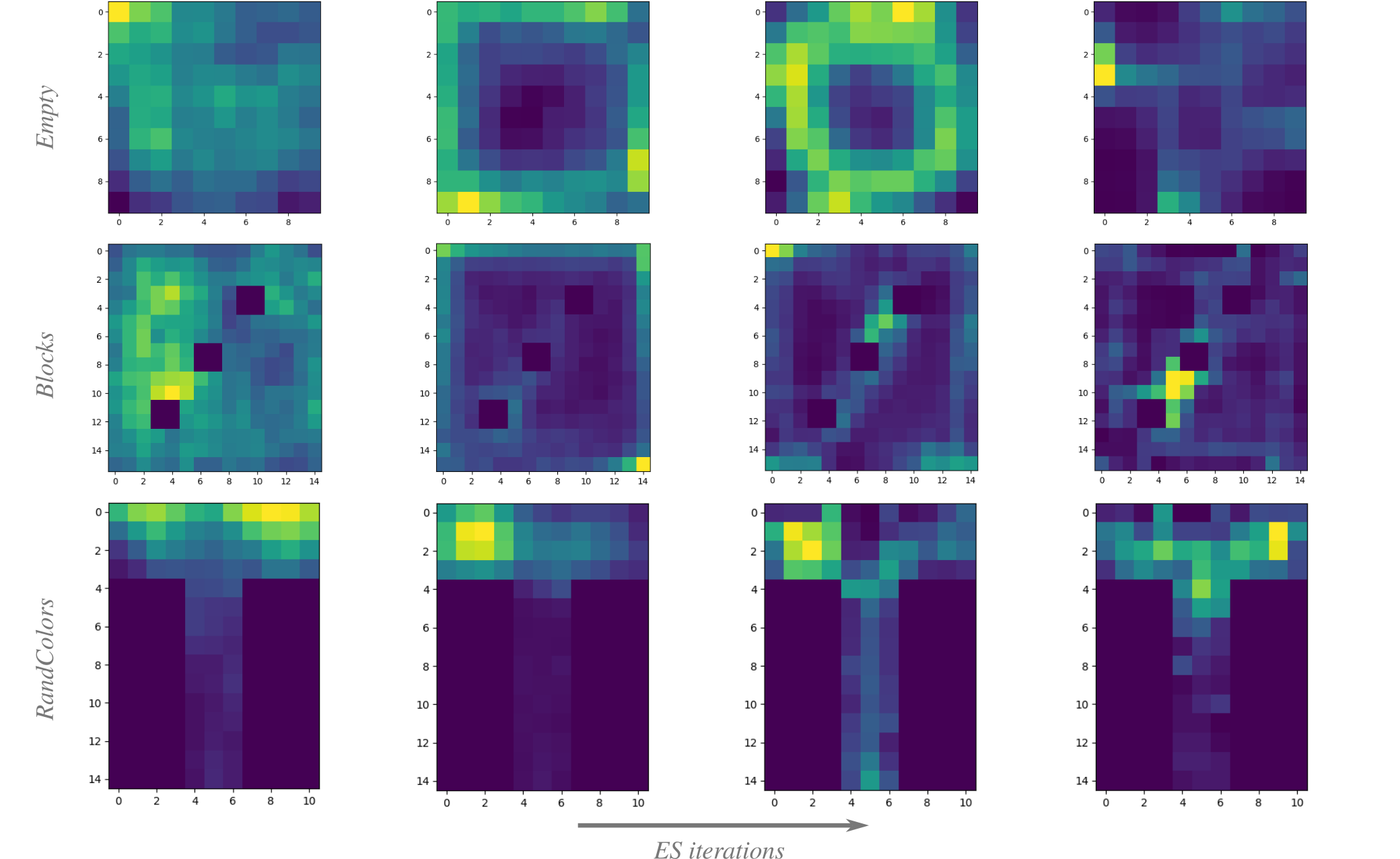}
  \caption{Each row corresponds to one environment, from top to bottom: \textit{Empty}, \textit{Blocks}, and \textit{RandColors}. Columns represent the optimization time of the ES (i.e., generations of OpenES), with the leftmost images corresponding to early generations and the rightmost images to later ones. Each image shows the visitation frequency heatmap of the most efficient explorer found up to the corresponding optimization step (lighter colors indicate more visits).}
    \label{fig:curricula}
    \vskip -1em
\end{figure}

\subsection{Increasingly complex behaviors}\label{apx:increasing-complex}

As discussed in Section~\ref{apx:method-overview}, we use an ES to optimize a population of agents towards (approximate) efficient exploration. By visualizing the behavior of the best agent found at each generation of the ES optimization process, we observe an evolution toward increasingly structured and complex behaviors. Figure~\ref{fig:curricula} illustrates this phenomenon for the \textit{Empty}, \textit{Blocks}, and \textit{RandColors} environments, showing the per-cell visitation frequencies of the best explorers found over the course of optimization (generations of OpenES), from left to right. We omit the \textit{Maze} environment as in this environment, agents converge to uniform coverage (discussed in Appendix~\ref{apx:more-results}).

Starting with \textit{Empty}, we observe that the earliest best strategy consists of approximately uniform coverage of the environment, resembling standard coverage-based exploration methods. As ES optimization progresses, a cyclic, ring-like pattern emerges. This is consistent with the fact that the middle region of the environment is composed mostly of empty cells and is therefore easier to model, leading the agent to allocate more time to harder to model regions such as the corners and edges. The last two plots show how this ring-like pattern progressively rotates until it approaches the helix-like strategy described in Section~\ref{sec:results1}, which achieves higher exploration efficiency than the seemingly more natural circular pattern. This final pattern is discussed more in detail in Appendix~\ref{apx:more-results}. Figure~\ref{fig:fs-curve} illustrates this progression to more structured behavior as the overall ECE of the ES population improves. The figure shows how progressively structured and complex behaviors arise as the average ECE of the ES population decreases (lower corresponds to better exploration efficiency).

A similar progression arises in the \textit{Blocks} environment. As in \textit{Empty}, the first strategy discovered by ES is close to uniform coverage. At later stages, however, better strategies invest more time to the more challenging to model edges and corners of the environment, including the boundaries of the blocks, while largely ignoring the empty regions. In the rightmost plot, the best strategy found by ES strongly prioritizes the region between the lower and middle blocks, thereby exploiting the symmetries of the environment. Notably, the preceding behavior already exhibits an early version of this strategy, but focused on the region between the middle and upper blocks.

Finally, in \textit{RandColors}, the earliest strategy differs qualitatively from the later ones: the agent focuses on the upper part of the environment, while largely ignoring the corridor, which is easier to model. In the second and third plots, the best strategies begin to focus more clearly on the left room. Although the right room is harder to model, concentrating on the left room initially yields a larger reduction in model error, as it is easier for the world model to learn this region. However, as ES continues to optimize agents for efficient exploration, the best strategy eventually allocates more time to the right room than to the left one. Further analysis of the agent's behavior in Section~\ref{sec:results1} shows that the final strategy is also less chaotic: it leaves time to collect data from the right room, but only after efficiently exploring the corridor and then the left room (see Figure~\ref{fig:results-maze-randcolors} and Section~\ref{sec:results1} for details). Thus, in this environment, an efficient explorer spends more time on the hardest to model regions, but only after visiting the easier ones first, yielding the following visitation order: corridor, left room, and right room.

In conclusion, across different environments and settings, optimizing for efficient exploration induces a sequence of related behaviors of progressively increasing complexity. This leads to a strong empirical evidence connecting optimizing towards exploration efficiency and open-endedness as described by \citet{hughes2024position}. Refer to Appendix~\ref{apx:open-endedness} for further discussion.

\section{Extended Discussion}

This section provides an expanded analysis of the practical and conceptual implications of efficient exploration. First, in Appendix~\ref{apx:order-matters} we detail how the sequencing of experience (discussed in Sections~\ref{sec:theory} and \ref{sec:results1})  naturally frames the efficient explorer as an active curriculum generator. Building on this, Appendix~\ref{apx:open-endedness} connects the emergent behaviors produced by efficient exploration at different levels of abstraction (at the interaction and behavioral levels) to the definition of open-endedness by \citet{hughes2024position}.

\subsection{The importance of order in experience and implicit curricula} \label{apx:order-matters}

As demonstrated in Section~\ref{sec:theory} and illustrated in Section~\ref{sec:results1}, the temporal ordering of experience is crucial for efficient exploration (e.g., the \textit{RandColors} results in Figure~\ref{fig:results-maze-randcolors} following the corridor, left room, and right room ordering pattern). Even when multiple regions are equally informative for reducing model uncertainty, the paths required to reach them can differ substantially. Transitioning between informative regions can require traversing less informative, or even already-modeled areas. Consequently, scheduling visitations becomes an optimization problem in its own right, as those in classical routing problems in combinatorial optimization \citep{goldberg1985tsp}. The \textit{Maze} environment perfectly illustrates this dynamic: from its initial position, the agent ignores the equally informative rightmost room to first explore the upper sections. Visiting the rightmost room initially would later force the agent to traverse the starting room a second time to reach the rest of the maze, resulting in redundant and less informative sampling.

In settings more complex than \texttt{SmallWorld}, the choice of the world-model hypothesis space $\setwm$ and the learning rule $\paramoptimi$ would profoundly influence the optimal ordering of experience. The informativeness of any given interaction is not absolute, but it is conditioned on previously acquired knowledge. For example, presenting a child with a textbook on set theory is unlikely to be informative without sufficient prior mathematical background, whereas the same experience may become highly informative at a later stage. This phenomenon is well documented in natural choices for $\setwm$ and $\paramoptimi$---such as NNs optimized via stochastic gradient descent---and has motivated extensive research in curriculum learning \citep{bengio2009curriculum}. From this perspective, an efficient explorer acts as a curriculum generator, constantly sequencing the best possible training data for the specific world model class at hand.

This dependence on curricula also limits how abruptly the global loss $\lossg$ can decrease. Highly informative experiences often require sufficient prior context to be interpretable and therefore useful for learning. This point is well illustrated in \citet{adams1979guide}: although a civilization is advanced enough to compute the answer to the meaning of life, the universe, and everything, it lacks the contextual knowledge required to understand the answer, i.e., $42$. Analogously, in learning systems, the informativeness of an experience is inherently relative to prior experience, which constrains the rate at which global model error---and consequently, the ECE---can be reduced.

Finally, capacity and plasticity constraints of models in $\setwm$ \citep{lyle2023understanding,dohare2024loss,abel2025plasticity} can further increase the importance of order in experience to achieve efficient exploration. With models with limited capacity, agents may need to strategically revisit, reinforce, or even selectively ignore experiences. In such settings, efficient exploration requires not only selecting which experiences to generate, but also deciding to forget them.

\subsection{Open-endedness} \label{apx:open-endedness}

\citet{hughes2024position} formalize open-endedness as a property of a system that continuously generates a sequence of artifacts, evaluated relative to an external observer. Specifically, an artifact-generating system is considered \textit{open-ended} if its outputs are both \textit{novel} and \textit{learnable} to that observer. Novelty implies that, for an observer frozen at time $t$, the artifacts become less predictable over time. Conversely, learnability dictates that conditioning on the historical sequence of artifacts makes future artifacts more predictable. We refer to \citet{hughes2024position} for further details.

Viewed through this lens, we identify two distinct processes within our framework that exhibit open-endedness driven by efficient exploration. The first occurs at the agent-environment interface. As proven in Section~\ref{sec:theory} and empirically validated in Section~\ref{sec:experiments} for more complex settings, an efficiently exploring agent systematically schedules its interactions, sampling the most informative and learnable regions first before transitioning to areas with higher inherent entropy (i.e., more challenging to model). Here, the agent acts as the system generating artifacts (the stream of interactions), while the world model serves as the \textit{observer}. Efficient exploration intrinsically forces these interactions to be novel and learnable for the world model---this is the primary mechanism by which the global loss and ECE are minimized. As noted by \citet{hughes2024position}, an open-ended process can also be temporally bounded. In our setting, beyond the choice of the horizon $t$, this bound is dictated by the agent's and environment's complexity (which limits artifact novelty) and the world model's capacity (which limits learnability).

The second open-ended process emerges at the meta-level during optimization (Section~\ref{sec:results1} and Appendix~\ref{apx:increasing-complex}). Optimizing for efficient exploration naturally yields a progression of increasingly complex behaviors (Figure~\ref{fig:results-empty-blocks}). In this context, the ES operates as the artifact-generating system (producing agent behaviors), while we, as humans, act as the observers. The sequence of behaviors generated by the ES is both novel and learnable: a behavior such as the highly structured, helix-like trajectory in the \textit{Empty} environment might appear unpredictable in isolation, but it becomes more interpretable when viewed as a progression from preceding, simpler behaviors like uniform coverage or ring-like shapes (see Figures~\ref{fig:fs-curve} and \ref{fig:curricula}). This phenomenon is also present in other environments, as shown in Appendix~\ref{apx:increasing-complex}. Without context, understanding the mechanisms driving these advanced behaviors is notoriously difficult. Similar to the first process (described in a lower level of abstraction), this meta-level open-ended process is inherently bounded by the optimization capabilities of the ES (also tied to environment, agent, and world model complexity) and our own capacity to interpret increasingly sophisticated behaviors.

Indeed, even in the relatively simple environments of \texttt{SmallWorld}, the highly structured helix pattern in the \textit{Empty} environment already challenges the limits of intuitive human interpretation. As this open-ended process is scaled to richer, more complex domains, we believe that highly sophisticated behaviors might emerge, possibly surpassing our capacity to fully comprehend their underlying mechanisms.








\end{document}